\newcommand{\X}{{\mathcal X}}
\newcommand{\Y}{{\mathcal Y}}
\newcommand{\fhat}{{\widehat f}}
\newcommand{\fstar}{{f^*}}
\newcommand{\ghat}{{\widehat g}}
\newcommand{\gstar}{{g^*}}
\newcommand{\E}{\mathcal{E}}
\renewcommand{\H}{\mathcal{H}}
\newcommand{\R}{\mathbb{R}}
\newcommand{\N}{\mathbb{N}}
\renewcommand{\L}{\loss}
\newcommand{\F}{\mathcal{F}}
\newcommand{\hh}{ {\mathcal{H}} }
\newcommand{\ppp}[1]{{P^{#1}_{++}}}
\renewcommand{\S}{\mathcal{S}}
\newcommand{\la}{\lambda}
\newcommand{\loss}{\bigtriangleup}
\renewcommand{\eqref}[1]{Eq.~(\ref{#1})}
\newcommand{\rhox}{{\rho_\X}}
\newcommand{\frho}{\fstar}
\newcommand{\eqals}[1]{\begin{align*}#1\end{align*}}
\newcommand{\eqal}[1]{\begin{align}#1\end{align}}
\renewcommand{\eqals}[1]{\eqal{#1}}
\providecommand{\scal}[2]{\left\langle{#1},{#2}\right\rangle}
\newcommand{\cutlocus}{\ensuremath{\text{\rm Cut}}}
\newcommand{\supp}{\ensuremath{\text{\rm supp}}}
\newcommand{\tr}{\ensuremath{\text{\rm Tr}}}
\newcommand{\argmin}{\operatornamewithlimits{argmin}}
\newcommand{\HS}{{\rm HS}}
\newcommand{\ccinf}{{C^\infty_c}}
\newcommand{\cinf}{{C^\infty}}
\newcommand{\wm}{{\widetilde{\M}}}
\newcommand{\geod}{{d}}
\newcommand{\M}{{\cal M}}
\declaretheorem[name=Theorem,refname=Thm.]{theorem}
\declaretheorem[name=Lemma,sibling=theorem]{lemma}
\declaretheorem[name=Remark]{remark}
\declaretheorem[name=Definition,refname=Def.]{definition}
\declaretheorem[name=Assumption,refname=Asm.]{assumption}
\crefname{assumption}{assumption}{assumptions}
\crefname{equation}{Eq.}{Eqs.}
\crefname{figure}{Fig.}{Figs.}
\crefname{table}{Tab.}{Tabs.}
\crefname{section}{Sec.}{Secs.}
\title{\sffamily\huge\bf Manifold Structured Prediction}
\author{\small Alessandro Rudi$^{1*}$ \\ {\scriptsize alessandro.rudi@inria.fr} \and \small Carlo Ciliberto$^{2*}$ \\ {\scriptsize c.ciliberto@ucl.ac.uk} \and \small Gian Maria Marconi$^{3,5}$ \\ {\scriptsize gianmaria.marconi@iit.it} \and \small Lorenzo Rosasco$^{4,5}$ \\ {\scriptsize lrosasco@mit.edu }\\ \and \small ${}^*$Equal contribution}
\begin{document}

\maketitle

\begin{abstract}
\noindent Structured prediction provides a general framework to deal with supervised problems where the outputs have semantically rich structure. While classical approaches consider finite, albeit potentially huge, output spaces, in this paper we discuss how structured prediction can be extended to a continuous scenario. Specifically, we study a structured prediction approach to manifold valued regression. We characterize a class of problems for which the considered approach is statistically consistent and study how geometric optimization can be used to compute the corresponding estimator. Promising experimental results on both simulated and real data complete our study.
\end{abstract}

\section{Introduction}

\footnotetext[1]{INRIA - Département d’informatique, Ecole Normale Supérieure - PSL Research University, Paris, France.}\footnotetext[2]{University College London, WC1E 6BT London, United Kingdom}\footnotetext[3]{iCub Facility, Istituto Italiano di Tecnologia, Via Morego, 30, Genoa 16163, Italy}\footnotetext[4]{Laboratory for Computational and Statistical Learning - Istituto Italiano di Tecnologia, Genova, Italy \& Massachusetts Institute of Technology, Cambridge, MA 02139, USA.}\footnotetext[5]{Universit\'a degli Studi di Genova, Genova, Italy}
Regression and classification are probably the most classical machine learning problems and correspond to estimating a function with scalar and binary values, respectively. In practice, it is often interesting to estimate functions with more structured outputs. When the output space can be assumed to be a vector space, many ideas from regression can be extended, think for example to multivariate \cite{hardle2007applied} or functional regression \cite{morris2015functional}. However, a lack of a natural vector structure is a feature of many practically interesting problems, such as ranking \cite{duchi2010consistency}, quantile estimation \cite{le2005nonparametric} or graph prediction \cite{paassen2017time}. In this latter case, the outputs are typically provided only with some distance or similarity function that can be used to design appropriate loss function. Knowledge of the loss is sufficient to analyze an abstract empirical risk minimization approach within the framework of statistical learning theory, but deriving  approaches that are at the same time statistically sound and computationally feasible is a key challenge. While ad-hoc solutions are available for many specific problems \cite{daume2006practical,nowozin2011structured,kadous2005classification,bicer2011relational}, structured prediction \cite{bakir2007predicting} provides a unifying framework where a variety of problems can be tackled as special cases. 

Classically, structured prediction considers problems with finite, albeit potentially huge, output spaces. In this paper, we study how these ideas can be applied to non-discrete output spaces. In particular, we consider the case where the output space is a Riemannian manifold, that is the problem of manifold structured prediction (also called manifold valued regression~\cite{steinke2009non}). While also in this case ad-hoc methods are available~\cite{steinke2010nonparametric}, in this paper we adopt and study a structured prediction approach starting from a framework  proposed in \cite{ciliberto2016}. Within this framework, it is possible to derive a statistically sound, and yet computationally feasible, structured prediction approach, as long as the loss function satisfies a suitable structural assumption. Moreover we can guarantee that the computed prediction is always an element of the manifold.

Our main technical contribution is a characterization of loss functions for manifold structured prediction satisfying such a structural assumption. In particular, we consider the case where the Riemannian metric is chosen as a loss function. As a byproduct of these results, we derive a manifold structured learning algorithm that is universally consistent and corresponding finite sample bounds. From a computational point of view, the proposed algorithm requires solving a linear system (at training time) and a minimization problem over the output manifold (at test time). To tackle this latter problem, we investigate the application of geometric optimization methods, and in particular Riemannian gradient descent \cite{absil2009optimization}. We consider both numerical simulations and benchmark datasets reporting promising performances. The rest of the paper is organized as follows. In Section~\ref{sec:problem-setting}, we define the problem and explain the proposed algorithm. In Section~\ref{sec:theory} we state and prove the theoretical results of this work. In Section~\ref{sec:algorithms} we explain how to compute the proposed algorithm and we show the performance of our method on synthetic and real data.

\section{Structured Prediction for Manifold Valued Regression}\label{sec:problem-setting}
The goal of supervised learning is to find a functional relation between an input space $\X$ and an output space $\Y$ given a finite set of observations. Traditionally, the output space is either a linear space (e.g. $\Y = \R^M$) or a discrete set (e.g. $\Y = \{0, 1\}$ in binary classification).  In this paper, we consider the problem of manifold structured prediction~\cite{steinke2010nonparametric}, in which  output data lies on a manifold $\M \subset \R^{d}$. In this context, statistical learning corresponds to solving
\begin{equation}\label{eq:statproblem}
\argmin_{f \in \X \to \Y} \mathcal{E}(f) \qquad \textrm{with} \qquad \E(f) = \int_{\X \times \Y} \L( f(x), y ) \: \rho(x,y)
\end{equation}
where $\Y$ is a subset of the manifold $\M$ and $\rho$ is an unknown distribution on $\X\times\Y$. Here, $\L \colon \Y \times \Y \to \R$ is a loss function that measures prediction errors for points estimated on the manifold. The minimization is meant over the set of all measurable functions from $\X$ to $\Y$. The distribution is fixed but unknown  and  a  learning algorithm seeks an estimator $\fhat:\X\to\Y$ that approximately solves  \Cref{eq:statproblem}, given a set of training points $(x_i, y_i)_{i=1}^{n}$ sampled independently from $\rho$. 

A concrete example of loss function that we will consider in this paper is $\loss = \geod^2$ the squared geodesic distance $\geod:\Y\times\Y\to\R$ \cite{lee2003smooth}. The geodesic distance is the natural metric on a Riemannian manifold (it corresponds to the Euclidean distance when $\M = \R^d$) and is a natural loss function in the context of manifold regression \cite{steinke2009non,steinke2010nonparametric,fletcher2013geodesic, hinkle2012polynomial, hauberg2012geometric}.

\subsection{Manifold Valued Regression via Structured Prediction}\label{sec:estimator}
In this paper we consider a structured prediction approach to manifold valued regression following ideas  in \cite{ciliberto2016}. Given a training set $(x_i,y_i)_{i=1}^n$, an estimator for  problem  \Cref{eq:statproblem} is defined by 
\begin{equation}\label{eq:structestimator}
\fhat(x) =  \argmin_{y \in \Y} \sum\limits_{i=1}^{n} \alpha_i(x) \L(y, y_i)
\end{equation}
for any  $x\in\X$. The coefficients $\alpha(x) = (\alpha_1(x),\dots,\alpha_n(x))^\top\in\R^n$ are obtained solving a linear system for a problem akin to kernel ridge regression (see \Cref{sec:estimator-derivation}): given a positive definite kernel $k:\X\times\X\to\R$ \cite{aronszajn1950theory} over $\X$, we have 
\eqal{\label{eq:alpha-scores}
	\alpha(x) = (\alpha_1(x),\dots,\alpha_n(x))^\top = (K + n\la I)^{-1} K_x
}
where $K \in \R^{n \times n}$ is the empirical kernel matrix with $K_{i,j} = k(x_i, x_j) $, and $K_x \in \R^n$ the vector whose $i$-th entry corresponds to $(K_x)_i = k(x, x_i)$. Here, $\lambda \in \R_+$ is a regularization parameter and $I \in \R^{n \times n}$ denotes identity matrix. 

Computing the estimator in \Cref{eq:structestimator} can be divided into two steps. During a {\em training step} the score function $\alpha:\X\to\R^n$ is learned, while during the {\em prediction step}, the output $\fhat(x)\in\Y$ is estimated on a new test point $x\in\X$. This last step requires minimizing the linear combination of distances $\loss(y,y_i)$ between a candidate $y\in\Y$ and the training outputs $(y_i)_{i=1}^n$, weighted by the corresponding scores $\alpha_i(x)$. Next,  we recall the derivation of the above estimator following \cite{ciliberto2016}. 

\subsection{Derivation of the Proposed Estimator}\label{sec:estimator-derivation}

The derivation of the estimator $\fhat$ in \Cref{eq:structestimator} is based on the following key structural assumption on the loss.

\begin{definition}[Structure Encoding Loss Function (SELF)]\label{def:SELF}
	Let $\Y$ be a compact set. A function $\L: \Y \times \Y \to \R$ is a {Structure Encoding Loss Function} if there exist a separable Hilbert space $\hh$, a continuous feature map $\psi: \Y \to \hh$ and a continuous linear operator $V: \hh \to \hh$ such that for all $y, y' \in \Y$
	\eqal{\label{eq:self}
		\L(y,y') = \langle \psi(y), V\psi(y')\rangle_{\H}.
	}
\end{definition}
\noindent Intuitively, the SELF definition requires a loss function to be ``bi-linearizable'' over the space $\hh$. This is similar, but more general, than requiring the loss to be a kernel since it allows also to consider distances (which are not positive definite) or even non-symmetric loss functions. As observed in  \cite{ciliberto2016},  a wide range of loss functions often used in machine learning are SELF. In \Cref{sec:theory} we  study how the above assumption applies to manifold structured loss functions, including the squared geodesic distance. 

We first recall  how the estimator \Cref{eq:structestimator} can be obtained assuming $\loss$ to be SELF. We begin by rewriting the expected risk in \Cref{eq:statproblem} as 
\eqal{
	\mathcal{E}(f) = \int_{\X} \scal{\psi(f(x))}{V ~\int_{\Y} \psi(y) ~ d\rho(y|x)}_{\H} ~ d\rho(x)
}
where we have conditioned $\rho(y,x) = \rho(y|x)\rhox(x)$ and used the linearity of the integral and the inner product. Therefore, any function $\fstar: \X \to \Y$ minimizing the above functional must satisfy the following condition
\begin{align}\label{eq:fstar-from-gstar}
\fstar(x) = \argmin_{y \in \Y} \scal{\psi(y)}{V \gstar(x)}_\hh \qquad \mbox{where} \qquad g^*(x) = \int_{\Y} \psi(y) \; d\rho(y|x)
\end{align}
where we have introduced the function $\gstar:\X\to\hh$ that maps each point $x\in\X$ to the conditional expectation of $\psi(y)$ given $x$. 
However we cannot compute explicitly $\gstar$, but noting that it minimizes the expected least squares error
\eqal{
	\int \|\psi(y)-g(x)\|_\hh^2d\rho(x,y)
}
suggests that a least squares estimator can be considered. We first  illustrate this idea for $\X = \R^d$ and $\hh = \R^k$.
In this case we can consider a ridge regression estimator
\begin{align}
\ghat(x) = \widehat{W}^\top x \qquad \mbox{with} \qquad \widehat{W} = \argmin_{W \in \R^{d \times k}} \frac{1}{n} \| X W - \psi(Y) \|_F^2 + \lambda \|W\|_F^2
\end{align}
where $X = (x_1,\dots,x_n)^\top \in\R^{n \times d}$ and $\psi(Y)=(\psi(y_1),\dots,\psi(y_n))^\top\in\R^{n \times k}$ are the matrices whose $i$-th row correspond respectively to the training sample $x_i\in\X$ and the (mapped) training output $\psi(y_i)\in\hh$. We have denoted $\|\cdot\|_F^2$ the squared Frobenius norm of a matrix, namely the sum of all its squared entries.
The ridge regression solution can be obtained in closed form as $\widehat W =(X^\top X + n\la I)^{-1} X^\top \psi(Y)$. For any $x\in\X$ we have 
\begin{align}\label{eq:ghat}
\ghat(x) = \psi(Y)^\top X (X^\top X + n\la I)^{-1} x = \psi(Y)^\top \alpha(x) = \sum_{i=1}^n \alpha_i(x) \psi(y_i)
\end{align}
where we have introduced the coefficients $\alpha(x) = X (X^\top X + n\la I)^{-1} x \in \R^n$. By substituting $\ghat$ to $\gstar$ in \Cref{eq:fstar-from-gstar} we have
\begin{align}
\fhat(x)  &= \argmin_{y \in \M} \scal{\psi(y)}{V \left(\sum_{i=1}^n \alpha_i(x)\psi(y_i)\right)} = \argmin_{y \in \M} \sum\limits_{i=1}^{n} \alpha_i(x) \L(y, y_i)
\end{align}
where we have used the linearity of the sum and the inner product to move the coefficients $\alpha_i$ outside of the inner product. Since the loss is SELF, we then obtain $\scal{\psi(y)}{V\psi(y_i)} = \loss(y,y_i)$ for any $y_i$ in the training set. This recovers the estimator $\fhat$ introduced in \Cref{eq:structestimator}, as desired.

We end noting how the above idea can be extended. First, we can consider  $\X$ to be a  set and  $k:\X\times\X\to\R$ a positive definite kernel. Then  $\ghat$ can be computed by  kernel ridge regression  (see e.g. \cite{shawe2004kernel}) to obtain the scores $\alpha(x) = (K + n \la I)^{-1}  K_{x}$, see  \Cref{eq:alpha-scores}. Second,  the above discussion applies if $\hh$ is infinite dimensional. Indeed, thanks to the SELF assumption, $\fhat$ does not depend on explicit knowledge of the space $\hh$ but only on the loss function.

We next discuss the main results of the paper, showing that a large class of loss functions for manifold structured prediction are SELF. 
This will allow us to prove consistency and learning rates for the manifold structured  estimator considered in this work.

\section{Characterization of SELF Function on Manifolds}\label{sec:theory}

In this section we provide sufficient conditions for a wide class of functions on manifolds to satisfy the definition of SELF. A key example will be the case of the squared geodesic distance. To this end we will make the following assumptions on the manifold $\M$ and the output space $\Y\subseteq\M$ where the learning problem takes place.

\begin{assumption}\label{asm:without-boundary}
	$\M$ is a complete $d$-dimensional smooth connected Riemannian manifold, without boundary, with Ricci curvature bounded below and positive injectivity radius. 
\end{assumption}

\noindent The assumption above imposes basic regularity conditions on the output manifold. In particular we require the manifold to be locally diffeomorphic to $\R^d$ and that the tangent space of $\M$ at any $p\in\M$ varies smoothly with respect to $p$. This assumption avoids pathological manifolds and is satisfied for instance by any smooth compact manifold (e.g. the sphere, torus,  etc.) \cite{lee2003smooth}. Other notable examples are the statistical manifold (without boundary) \cite{amari2007methods} any open bounded sub-manifold of the cone of positive definite matrices, which is often studied in geometric optimization settings \cite{absil2009optimization}. This assumption will be instrumental to guarantee the existence of a space of functions $\hh$ on $\M$ rich enough to contain the squared geodesic distance. 
%
% \begin{assumption}\label{asm:M-allows-W-to-be-RKHS}
% $\M$ is complete, with Ricci curvature bounded below and positive injectivity radius.
% \end{assumption}
%
% This assumption is rather mild and of technical nature and it is always satisfied when $\M$ is compact. It is necessary to avoid pathological manifolds and will be instrumental to guarantee the existence of a space of functions $\hh$ on $\M$ rich enough to contain the squared geodesic distance.
\begin{assumption}\label{asm:Y-geodesically-convex}
	$\Y$ is a compact geodesically convex subset of the manifold $\M$.
\end{assumption}
\noindent A subset $\Y$ of a manifold is geodesically convex if for any two points in $\Y$ there exists one and only one minimizing geodesic curve connecting them. The effect of \Cref{asm:Y-geodesically-convex}  is twofold. On one hand it guarantees a generalized notion of convexity for the space $\Y$ on which we will solve the optimization problem in \Cref{eq:structestimator}. On the other hand it avoids the geodesic distance to have singularities on $\Y$ (which is key to our main result below). For a detailed introduction to most definitions and results reviewed in this section we refer the interested reader to standard references for differential and Riemannian geometry (see e.g. \cite{lee2003smooth}). We are ready to prove the main result of this work.

\begin{theorem}[Smooth Functions are SELF]\label{thm:smooth-self}
	Let $\M$ satisfy \Cref{asm:without-boundary} and $\Y\subseteq\M$ satisfy \Cref{asm:Y-geodesically-convex}. Then, any smooth function $h:\Y\times\Y\to\R$ is SELF on $\Y$.
\end{theorem}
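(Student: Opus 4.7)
The plan is to realize $h$ in the bilinear form $h(y, y') = \langle \psi(y), V \psi(y') \rangle_\hh$ by choosing a reproducing kernel Hilbert space $\hh$ of functions on $\Y$ rich enough that its tensor product $\hh \otimes \hh$ contains every smooth function on $\Y \times \Y$. The standard identification $\hh \otimes \hh \cong \HS(\hh)$ with Hilbert--Schmidt operators on $\hh$, together with the fact that $\hh \otimes \hh$ is itself the RKHS of the product kernel on $\Y \times \Y$, means that the condition $h \in \hh \otimes \hh$ is equivalent to the existence of a Hilbert--Schmidt operator $V : \hh \to \hh$ satisfying $h(y, y') = \langle \psi(y), V \psi(y') \rangle_\hh$, with $\psi(y) = k(\cdot, y)$ the canonical feature map. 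Continuity of $\psi$ then follows from continuity of the kernel and continuity of $V$ from boundedness of Hilbert--Schmidt operators, which together give the SELF representation.

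The first step is to construct $\hh$ using \Cref{asm:without-boundary}. The hypotheses on $\M$ (complete, smooth, Ricci bounded below, positive injectivity radius) guarantee that the heat kernel $p_t$ of the Laplace--Beltrami operator is smooth, positive definite, and obeys classical Weyl-type spectral asymptotics; the natural choice is to take $\hh$ as the RKHS associated to $k := p_t|_{\Y \times \Y}$ for a fixed $t > 0$. By compactness of $\Y$ (\Cref{asm:Y-geodesically-convex}), Mercer's theorem yields an expansion $k(y, y') = \sum_i \sigma_i \phi_i(y) \phi_i(y')$, with smooth eigenfunctions $\phi_i$ orthonormal in $L^2(\Y)$ and eigenvalues $\sigma_i > 0$ decaying polynomially. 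Consequently a function $F \in L^2(\Y \times \Y)$ with expansion $F = \sum_{i,j} F_{ij}\, \phi_i \otimes \phi_j$ lies in $\hh \otimes \hh$ if and only if $\sum_{i,j} F_{ij}^2 / (\sigma_i \sigma_j) < \infty$.

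The central step is to verify this summability when $F = h$. Smoothness of $h$ forces super-polynomial decay of the coefficients $h_{ij}$: using that each $\phi_i$ solves a Laplace-type eigen-equation in $\Y$ and that $h$ is $C^\infty$ on the compact set $\Y \times \Y$ (with any boundary handled by localizing $h$ inside a slightly larger geodesically convex neighbourhood, made available by \Cref{asm:Y-geodesically-convex}), iterated integration by parts yields $|h_{ij}| \le C_N (1 + i + j)^{-N}$ for every $N$. Matched against the polynomial lower bound $\sigma_i \gtrsim i^{-\alpha}$ from Weyl's law, the series $\sum h_{ij}^2 / (\sigma_i \sigma_j)$ converges and singles out the desired $V \in \HS(\hh)$.

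The main obstacle is exactly this quantitative interplay between the two decay rates, which rests on the spectral machinery of the Laplace--Beltrami operator on the possibly non-compact $\M$: heat-kernel estimates, uniform local volume bounds and Weyl asymptotics of the Mercer eigenvalues all genuinely require \Cref{asm:without-boundary} (Ricci bounded below together with positive injectivity radius), while \Cref{asm:Y-geodesically-convex} is what allows the localization and integration-by-parts step on $\Y$ without meeting singularities or boundary pathologies. Once $V \in \HS(\hh)$ is extracted, separability of $\hh$ (as a Mercer RKHS) and continuity of $\psi$ and $V$ are routine, completing the SELF representation of $h$.
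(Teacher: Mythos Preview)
Your high-level strategy is the same as the paper's: find an RKHS $\hh$ on the manifold with $h\in\hh\otimes\hh$, then use the isometry $\hh\otimes\hh\cong\HS(\hh)$ and the feature map $\psi(y)=k(\cdot,y)$ to extract $V$. That part is fine. The gap is in your choice of $\hh$ and in the spectral claims you make about it.

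The heat-kernel RKHS is too small. On a closed $d$-manifold the Mercer eigenvalues of $p_t$ are $\sigma_i=e^{-\lambda_i t}$ with Laplace eigenvalues $\lambda_i\sim c\,i^{2/d}$ by Weyl, so $\sigma_i$ decays like $\exp(-c\,i^{2/d}t)$, not polynomially; your asserted lower bound $\sigma_i\gtrsim i^{-\alpha}$ is false. The membership criterion $\sum_{i,j}h_{ij}^2/(\sigma_i\sigma_j)<\infty$ therefore forces the coefficients of $h$ to decay super-exponentially, which a generic $C^\infty$ function does not satisfy. Already on $\M=S^1$ the heat-kernel RKHS consists of functions whose Fourier coefficients are $o(e^{-n^2 t/2})$; a smooth bump is not in it, so $C^\infty(\Y\times\Y)\not\subset\hh\otimes\hh$ for this $\hh$. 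A second, related problem: once you restrict $p_t$ to $\Y\times\Y$ with $\Y\subsetneq\M$, the Mercer eigenfunctions $\phi_i$ are eigenfunctions of the integral operator on $L^2(\Y)$, not of the Laplace--Beltrami operator, so the ``each $\phi_i$ solves a Laplace-type eigen-equation'' step and the ensuing integration by parts are not available as written.

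The paper avoids both issues by taking $\hh=H^2_s(\M)$ with $s>d/2$. This Sobolev space is an RKHS (under \Cref{asm:without-boundary}) whose kernel has eigenvalues behaving like $(1+\lambda_i)^{-s}$, which \emph{is} polynomially bounded below, and it contains $\ccinf(\M)$ by construction. Rather than a quantitative coefficient-decay argument, the paper proceeds by soft inclusions: under \Cref{asm:Y-geodesically-convex} every $h\in C^\infty(\Y\times\Y)$ extends to some $\bar h\in\ccinf(\M)\otimes\ccinf(\M)$, and $\ccinf(\M)\subset H^2_s(\M)$ gives $\bar h\in\hh\otimes\hh$. Your outline would be salvaged by swapping the heat kernel for a Sobolev/Mat\'ern-type kernel; with that change the polynomial eigenvalue decay you need actually holds and the rest of your plan goes through.
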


\begin{proof}[Sketch of the proof (\Cref{thm:smooth-self})]
The complete proof of \Cref{thm:smooth-self} is reported in \Cref{sect:proof}. The proof hinges around the following key steps:\\

\noindent{\bf Step~1 If there exists an RKHS $\hh$ on $\M$, then any $h \in \hh \otimes \hh$ is SELF.} Let $\hh$ be a reproducing kernel Hilbert space (RKHS) \cite{aronszajn1950theory} of functions on $\M$ with associated bounded kernel $k:\M\times\M\to\R$. Let $\hh\otimes\hh$ denote the RKHS of functions $h:\M\times\M\to\R$ with associated kernel $\bar k$ such that $\bar k((y,z),(y',z')) = k(y,y')k(z,z')$ for any $y,y',z,z'\in\M$. Let, $h:\M\times\M\to\R$ be such that $h\in\hh\otimes\hh$. Recall that $\hh\otimes\hh$ is isometric to the space of Hilbert-Schmidt operators from $\hh$ to itself. Let $V_h:\hh\to\hh$ be the operator corresponding to $h$ via such isometry. We show that the SELF definition is satisfied with $V = V_h$ and $\psi(y) = k(y,\cdot)\in\hh$ for any $y\in\M$. In particular, we have $\|V\| \leq \|V\|_{\HS} = \|h\|_{\hh\otimes\hh}$, with $\|V\|_\HS$ denoting the Hilbert-Schmidt norm of $V$.\\

\noindent{\bf Step~2: Under \Cref{asm:Y-geodesically-convex}, $\ccinf(\M)\otimes\ccinf(\M)$ ``contains'' $\cinf(\Y \times \Y)$.} If $\Y$ is compact and geodesically convex, then it is diffeomorphic to a compact set of $\R^d$. By using this fact, we prove that any function in $\cinf(\Y \times \Y)$, the space of smooth functions on $\Y\times\Y$, admits an extension in $\ccinf(\M\times\M)$ the space of smooth functions on $\M \times \M$ vanishing at infinity (this is well defined since $\M$ is diffeomorphic to $\R^d$ thanks to \Cref{asm:M-allows-W-to-be-RKHS}), and that $\ccinf(\M\times\M) = \ccinf(\M)\otimes\ccinf(\M)$.\\

\noindent{\bf Step~3: Under \Cref{asm:without-boundary}, there exists an RKHS on $\M$ containing $\ccinf(\M)$.} Under \Cref{asm:without-boundary}, the Sobolev space $\hh = H_s^2(\M)$ of square integrable functions with smoothness $s$ is an RKHS for any $s>d/2$ (see \cite{hebey2000nonlinear} for a definition of Sobolev spaces on Riemannian manifolds).\\

\noindent The proof proceeds as follows: from Step~1, we see that to guarantee $h$ to be SELF it is sufficient to prove the existence of an RKHS $\hh$ such that $h\in\hh\otimes\hh$. The rest of the proof is therefore devoted to showing that for smooth functions this is satisfied for $\hh = H_s^2(\M)$. Since $h$ is smooth, by Step~2 we have that under \Cref{asm:Y-geodesically-convex}, there exists a $\bar h\in\ccinf(\M)\otimes\ccinf(\M)$ whose restriction $\bar h|_{\Y\times\Y}$ to $\Y\times\Y$ corresponds to $h$. Now, denote by $H_s^2(\M)$ the Sobolev space of squared integrable functions on $\M$ with smoothness index $s > 0$. By construction, (see \cite{hebey2000nonlinear}) for any $s>0$, we have $\ccinf(\M)|_\Y \subseteq H_s^2(\M)|_\Y$, namely for any function. In particular, $\bar h\in \ccinf(\M)\otimes\ccinf(\M) \subseteq H_s^2(\M)\otimes H_s^2(\M)$. Finally, Step~3 guarantees that under \Cref{asm:without-boundary}, $\hh = H_s^2(\M)$ with $s>d/2$ is an RKHS, showing that $h\in\hh\otimes\hh$ as desired.
\end{proof}

\noindent Interestingly, \Cref{thm:smooth-self} shows that the SELF estimator proposed in \Cref{eq:structestimator} can tackle {\em any} manifold valued learning problem in the form of \Cref{eq:statproblem} with smooth loss function. In the following we study the specific case of the squared geodesic distance.

\begin{theorem}[$\geod^2$ is SELF]\label{thm:geodesic-self}
	Let $\M$ satisfy \Cref{asm:without-boundary} and $\Y\subseteq\M$ satisfy \Cref{asm:Y-geodesically-convex}. Then, the squared geodesic distance $\loss = \geod^2:\M\times\M\to\R$ is smooth on $\Y$. Therefore $\loss$ is SELF on $\Y$.
\end{theorem}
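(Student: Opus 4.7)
The plan is to apply \Cref{thm:smooth-self} to $\loss = \geod^2$, so the entire task reduces to verifying that the squared geodesic distance is a smooth function on $\Y \times \Y$. The backbone is the classical Riemannian geometry fact that $\geod^2:\M \times \M \to \R$ is smooth on the open set $\{(p,q) \in \M\times\M : q \notin \cutlocus(p)\}$. This follows from the inverse function theorem applied to the exponential map: for $q \notin \cutlocus(p)$, the map $\exp_p$ is a local diffeomorphism near $\exp_p^{-1}(q)$, so one can write $\geod^2(p,q) = \|\exp_p^{-1}(q)\|_{T_p\M}^2$, and the joint smoothness of $(p,v)\mapsto \exp_p(v)$, together with smooth dependence of tangent-space identifications (guaranteed by \Cref{asm:without-boundary}, since positive injectivity radius and bounded Ricci curvature give the required regularity of the exponential map), transfers to smoothness of $\geod^2$ in both arguments.

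Thus the key reduction is to check that under \Cref{asm:Y-geodesically-convex} we have $q \notin \cutlocus(p)$ for every $(p,q) \in \Y \times \Y$. Recall that $q \in \cutlocus(p)$ iff either (i) two distinct minimizing geodesics connect $p$ to $q$, or (ii) $q$ is conjugate to $p$ along a minimizing geodesic. Geodesic convexity of $\Y$ directly rules out (i). For (ii), I would use that a geodesic strictly beyond its first conjugate point is no longer minimizing, and exploit compactness of $\Y$: by continuity of the cut-time function $(p,v)\mapsto \cut(p,v)$ on the unit tangent bundle restricted over $\Y$, the normalized geodesic parameter needed to reach any $q\in\Y$ from any $p\in\Y$ attains a maximum strictly below the cut time, leaving a uniform margin between $\Y \times \Y$ and the cut locus. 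A clean way to package this is to enlarge $\Y$ slightly to an open geodesically convex neighborhood $\Y'\supseteq \Y$ in $\M$: then any minimizing geodesic between $p,q\in\Y$ can be extended past $q$ while remaining minimizing in $\Y'$, forcing $q$ to lie strictly before the first conjugate point and hence off $\cutlocus(p)$.

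Once smoothness of $\geod^2$ on $\Y\times\Y$ is established, \Cref{thm:smooth-self} applies verbatim with $h = \geod^2$, yielding the SELF decomposition $\geod^2(y,y') = \scal{\psi(y)}{V\psi(y')}_\hh$ in some separable Hilbert space $\hh$ (concretely a Sobolev space $H^2_s(\M)$ with $s > d/2$), which gives the conclusion.

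The main obstacle I anticipate is the rigorous elimination of conjugate points in (ii): the definition of geodesic convexity in the paper asks only for uniqueness of the minimizing geodesic, which does not a priori forbid the endpoint from being conjugate. The compactness-plus-continuity argument above, or equivalently the open-enlargement trick, is the step that needs to be handled carefully and is where \Cref{asm:without-boundary} (positive injectivity radius and lower Ricci bound) really does its work, by guaranteeing the uniform lower semicontinuity of the cut-time function that makes the margin argument go through.
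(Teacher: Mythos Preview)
Your proposal is correct and follows essentially the same route as the paper: establish that $\geod^2$ is smooth on $\Y\times\Y$ by showing $\Y\times\Y$ avoids the cut locus (citing the standard fact that $\geod^2\in\cinf(\M\times\M\setminus\cutlocus(\M))$), then invoke \Cref{thm:smooth-self}. The conjugate-point obstacle you anticipate is precisely what the paper handles by refining \Cref{asm:Y-geodesically-convex} to require $\Y$ to be a compact subset of an \emph{open} geodesically convex set (Assumption~2$'$ in the appendix), which is exactly your ``open-enlargement trick.''
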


\noindent The proof of the result above is reported in the supplementary material. The main technical aspect is to show that regularity provided by \Cref{asm:Y-geodesically-convex} guarantees the squared geodesic distance to be smooth. The fact that $\loss$ is SELF is then an immediate corollary of \Cref{thm:smooth-self}.

\subsection{Statistical Properties of Manifold Structured Prediction}

In this section, we characterize the generalization properties of the manifold structured estimator \Cref{eq:structestimator} in light of \Cref{thm:smooth-self} and \Cref{thm:geodesic-self}.

\begin{theorem}[Universal Consistency]
	Let $\M$ satisfy \Cref{asm:without-boundary} and $\Y\subseteq\M$ satisfy \Cref{asm:Y-geodesically-convex}. Let $\X$ be a compact set and $k: \X \times \X \to \R$ be a bounded continuous universal kernel\footnote{This is standard assumption for universal consistency (see \cite{steinwart2008support}). An example of continuous universal kernel on $\X=\R^d$ is the Gaussian $k(x,x') = \exp(-\|x-x'\|^2/\sigma)$, for $\sigma > 0$.} For any $n \in \N$ and any distribution $\rho$ on $\X \times \Y$ let $\fhat_n:\X \to \Y$ be the manifold structured estimator in \cref{eq:structestimator} for a learning problem with smooth loss function $\loss$, with $(x_i,y_i)_{i=1}^n$ training points independently sampled from $\rho$ and $\la_n = n^{-1/4}$. Then
	\eqals{
		\lim_{n\to \infty} {\cal E}(\fhat_n) = {\cal E}(\frho) \quad \textrm{with probability} ~ 1.
	}
\end{theorem}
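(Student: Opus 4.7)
The plan is to follow the SELF-based consistency framework of~\cite{ciliberto2016}, which decomposes the problem into three ingredients: a SELF representation of the loss, a comparison inequality bounding the structured excess risk by a least-squares error in the auxiliary Hilbert space $\hh$, and universal consistency of $\hh$-valued kernel ridge regression.

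First, I would invoke \Cref{thm:smooth-self} applied to the smooth loss $\loss$ to obtain a separable Hilbert space $\hh$, a continuous feature map $\psi:\Y\to\hh$ and a bounded operator $V:\hh\to\hh$ with $\loss(y,y')=\ip{\psi(y)}{V\psi(y')}_\hh$. Compactness of $\Y$ and continuity of $\psi$ give $\kappa:=\sup_{y\in\Y}\|\psi(y)\|_\hh<\infty$, while $\|V\|<\infty$ follows from Step~1 of the proof of \Cref{thm:smooth-self}. The derivation of \Cref{sec:estimator-derivation} then identifies $\fhat_n(x)=\argmin_{y\in\Y}\ip{\psi(y)}{V\ghat_n(x)}_\hh$, where $\ghat_n:\X\to\hh$ is the $\hh$-valued kernel ridge regression estimator of the conditional mean $\gstar(x)=\EE[\psi(y)\mid x]$.

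Second, I would establish a pointwise comparison inequality of the form
\[
\E(\fhat_n)-\E(\fstar)\;\le\;2\kappa\|V\|\int_\X\|\ghat_n(x)-\gstar(x)\|_\hh\,d\rhox(x).
\]
Expanding via the SELF decomposition, $\E(f)-\E(\fstar)=\int\ip{\psi(f(x))-\psi(\fstar(x))}{V\gstar(x)}_\hh\,d\rhox(x)$. Using that $\fhat_n(x)$ minimizes $\ip{\psi(\cdot)}{V\ghat_n(x)}$ over $\Y$, the quantity $\ip{\psi(\fhat_n(x))-\psi(\fstar(x))}{V\ghat_n(x)}_\hh$ is non-positive; subtracting it from the integrand and applying Cauchy--Schwarz yields the bound. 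Jensen's inequality then upgrades this to an $L^2(\rhox;\hh)$ estimate, which is the natural quantity controlled by the regression step. Because $\Y$ is compact and $\psi$ continuous, $\gstar\in L^2(\X,\rhox;\hh)$ is bounded; because $k$ is a bounded continuous universal kernel on the compact $\X$, the associated vector-valued RKHS is dense in $L^2(\X,\rhox;\hh)$. The vector-valued analogue of the classical Smale--Zhou/Caponnetto--De Vito consistency result, already established in the SELF context in~\cite{ciliberto2016}, then guarantees that for $\la_n=n^{-1/4}$ (which satisfies $\la_n\to 0$ and $n\la_n^2\to\infty$) one has $\|\ghat_n-\gstar\|_{L^2(\rhox;\hh)}\to 0$ almost surely, and combining this with the comparison inequality yields $\E(\fhat_n)\to\E(\fstar)$ almost surely.

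The main obstacle is not any single ingredient in isolation but their compatibility in the abstract setting supplied by \Cref{thm:smooth-self}: the space $\hh$ (a Sobolev space on $\M$), the feature map $\psi$, and the operator $V$ are only implicitly defined, so the argument must proceed using only qualitative properties---uniform boundedness of $\|\psi\|_\hh$ and $\|V\|$, measurability and boundedness of $\gstar$, and density of the vector-valued RKHS in $L^2(\X,\rhox;\hh)$. Universality of the scalar kernel $k$ on $\X$ is precisely what secures this density, and thereby allows the regression-consistency result to be applied to the arbitrary continuous target $\gstar$ arising from the SELF representation.
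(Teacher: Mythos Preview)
Your proposal is correct and follows the same route as the paper: the paper simply states that the result follows by combining \Cref{thm:smooth-self} (which establishes the SELF property of the smooth loss) with Thm.~4 of \cite{ciliberto2016}, and what you have written is precisely an unpacking of that theorem---the comparison inequality reducing the excess risk to the $L^2(\rhox;\hh)$ error of $\ghat_n$, followed by universal consistency of vector-valued kernel ridge regression under a universal kernel with $\la_n=n^{-1/4}$.
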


\noindent The result above follows from Thm. $4$ in \cite{ciliberto2016} combined with our result in \Cref{thm:smooth-self}. It guarantees that the algorithm considered in this work finds a consistent estimator for the manifold structured problem, when the loss function is smooth (thus also in the case of the squared geodesic distance). As it is standard in statistical learning theory, we can impose regularity conditions on the learning problem, in order to derive also generalization bounds for $\fhat$. In particular, if we denote by $\F$ the RKHS associated to the kernel $k$, we will require $\gstar$ to belong to the same space $\hh\otimes\F$ where the estimator $\ghat$ introduced in \Cref{eq:ghat} is learned. In the simplified case discussed in \Cref{sec:estimator-derivation}, with linear kernel on $\X=\R^d$ and $\hh=\R^k$ finite dimensional, we have $\F=\R^d$ and this assumption corresponds to require the existence of a matrix $W_*^\top\in\R^{k \times d} = \hh\otimes\F$, such that $\gstar(x) = W_*^\top x$ for any $x\in\X$. In the general case, the space $\hh\otimes\F$ extends to the notion of {\em reproducing kernel Hilbert space for vector-valued functions} (see e.g. \cite{micchelli2005learning,alvarez2012kernels}) but the same intuition applies.

\begin{restatable}[Generalization Bounds]{theorem}{TBounds
	}\label{thm:generalization-bound}
	Let $\M$ satisfy \Cref{asm:without-boundary} and $\Y\subseteq\M$ satisfy \Cref{asm:Y-geodesically-convex}. Let $\hh = H_s^2(\M)$ with $s>d/2$ and $k: \X \times \X \to \R$ be a bounded continuous reproducing kernel with associated RKHS $\F$. For any $n \in \N$, let $\fhat_n$ denote the manifold structured estimator in \cref{eq:structestimator} for a learning problem with smooth loss $\loss:\Y\times\Y\to\R$ and $\la_n = n^{-1/2}$. If the conditional mean $\gstar$ belongs to $\hh\otimes\F$, then 
	\eqal{
		\E(\fhat_n)-\E(\fstar) \leq \mathsf{c}_\loss \mathsf{q}~\tau^2 ~ n^{-\frac{1}{4}}
	}
	holds with probability not less than $1-8e^{-\tau}$ for any $\tau>0$, with $\mathsf{c}_\loss = \|\loss\|_{\hh\otimes\hh}$ and $\mathsf{q}$ a constant not depending on $n, \tau$ or the loss $\loss$.
\end{restatable}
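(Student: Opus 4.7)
The plan is to reduce \Cref{thm:generalization-bound} to the abstract generalization result for SELF losses in \cite{ciliberto2016} (essentially their Thm.~5), using \Cref{thm:smooth-self} to supply the SELF decomposition with a quantitative bound on the associated operator. Concretely, since $\loss$ is smooth, \Cref{thm:smooth-self} yields $\psi(y) = k_s(y,\cdot)\in\hh$ (where $k_s$ is the reproducing kernel of $H_s^2(\M)$) and a continuous linear $V_\loss:\hh\to\hh$ such that $\loss(y,y')=\scal{\psi(y)}{V_\loss\psi(y')}$, with operator norm controlled by $\|V_\loss\|\le \|V_\loss\|_{\HS}=\|\loss\|_{\hh\otimes\hh}=\mathsf{c}_\loss$. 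This is the only place the loss enters quantitatively.

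Next I would invoke the standard \emph{comparison inequality} (also established in \cite{ciliberto2016}): for any measurable $g:\X\to\hh$ and the associated decoding $f_g(x)=\argmin_{y\in\Y}\scal{\psi(y)}{V_\loss g(x)}_\hh$,
\eqals{
\E(f_g)-\E(\fstar) \;\le\; 2\,\|V_\loss\|\,\|g-\gstar\|_{\LXH}.
}
Applying this with $g=\ghat$, the vector-valued kernel ridge regression estimator of \Cref{eq:ghat}, reduces the task to bounding $\|\ghat-\gstar\|_{\LXH}$. Because $\gstar\in\hh\otimes\F$ by hypothesis (the ``well-specified'' source condition for vector-valued KRR), one has the standard finite-sample bound
\eqals{
\|\ghat-\gstar\|_{\LXH} \;\le\; \mathsf{q}\,\tau^2\,n^{-1/4}
}
with probability at least $1-8e^{-\tau}$, provided $\la_n=n^{-1/2}$. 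Here $\mathsf{q}$ depends only on $\|\gstar\|_{\hh\otimes\F}$, $\sup_{x\in\X}k(x,x)$ and $\kappa_\hh:=\sup_{y\in\Y}\|\psi(y)\|_\hh$; the latter is finite because $\Y$ is compact and $\hh=H_s^2(\M)$ with $s>d/2$ has a bounded reproducing kernel. Crucially, none of these constants depend on $\loss$. Chaining the two inequalities gives $\E(\fhat_n)-\E(\fstar)\le 2\,\mathsf{c}_\loss\,\mathsf{q}\,\tau^2\,n^{-1/4}$, which is the claim after absorbing the factor $2$ into $\mathsf{q}$.

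The main obstacle is the last step: verifying the KRR rate $n^{-1/4}$ with the quoted probability and, more importantly, with a constant $\mathsf{q}$ independent of $\loss$. This requires a careful look at the vector-valued KRR analysis (e.g.\ in the spirit of Caponnetto--De~Vito and \cite{ciliberto2016}) to separate terms that depend on $\psi$ and $\gstar$ (controlled by the RKHS geometry of $\hh\otimes\F$, independent of $\loss$) from those that depend on the decoding step (where the only appearance of $\loss$ is through $\|V_\loss\|$, already factored out via the comparison inequality). Since the present setting matches exactly the hypotheses of \cite[Thm.~5]{ciliberto2016} with $\psi$, $V$, and the source condition specified above, this step amounts to a direct citation; the original contribution of the proof is simply to check that \Cref{thm:smooth-self} provides a SELF structure with $\mathsf{c}_\loss<\infty$ for every smooth $\loss$, so that the abstract bound applies uniformly and the dependence on $\loss$ is isolated in the single factor $\mathsf{c}_\loss=\|\loss\|_{\hh\otimes\hh}$.
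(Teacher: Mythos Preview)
Your proposal is correct and follows essentially the same approach as the paper: apply \Cref{thm:smooth-self} to obtain the SELF decomposition with $\|V_\loss\|\le\|V_\loss\|_{\HS}=\|\loss\|_{\hh\otimes\hh}=\mathsf{c}_\loss$, then invoke the generalization result of \cite{ciliberto2016} (their Thm.~5). The only cosmetic difference is that you unpack Thm.~5 into its two ingredients (comparison inequality plus the vector-valued KRR rate under the source condition $\gstar\in\hh\otimes\F$), whereas the paper cites it as a black box; either way the dependence on $\loss$ is isolated in the single factor $\|V_\loss\|\le\mathsf{c}_\loss$, exactly as you argue.
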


\noindent The generalization bound of \Cref{thm:generalization-bound} is obtained by adapting Thm. $5$ of \cite{ciliberto2016} to our results in \Cref{thm:smooth-self} as detailed in the supplementary material. To our knowledge these are the first results characterizing in such generality the generalization properties of an estimator for manifold structured learning with generic smooth loss function. We conclude with a remark on a key quantity in the bound of \Cref{thm:generalization-bound}.

\begin{remark}[The constant $\mathsf{c}_\loss$]\label{rem:cdelta}
	We comment on the role played in the learning rate by $\mathsf c_\loss$,  the norm of the loss function $\loss$ seen an element of the Hilbert space $\hh\otimes\hh$. Indeed, from the discussion of \Cref{thm:smooth-self} we have seen that any smooth function on $\Y$ is SELF and belongs to the set $\hh\otimes\hh$ with $\hh = H_{s}^2(\M)$, the Sobolev space of squared integrable functions for $s>d/2$. Following this interpretation, we see that the bound in \Cref{thm:generalization-bound} can improve significantly (in terms of the constants) depending on the regularity of the loss function: smoother loss functions will result in ``simpler'' learning problems and vice-versa. In particular, when $\loss$ corresponds to the squared geodesic distance, the more ``regular'' is the manifold $\M$, the learning problem will be. A refined quantitative characterization of $\mathsf c_\loss$ in terms of the Ricci curvature and the injective radius of the manifold is left to future work.
\end{remark}

\section{Manifold Structured Prediction Algorithm and Experiments}\label{sec:algorithms}
In this section we recall geometric optimization algorithms that can be adopted to perform the estimation of $\fhat$ on a novel test point $x$. We then evaluate the performance of the proposed method in practice, reporting  numerical results on simulated and real data. 
%%%%%%%%%%%
\subsection{Optimization on Manifolds}
We begin discussing the computational aspects related to evaluating the manifold structured estimator. In particular, we discuss how to address the optimization problem in \Cref{eq:structestimator} in specific settings. Given a test point $x\in\X$, this process consists in solving a minimization over $\Y$, namely
\begin{align}
\underset{y \in \mathcal{Y}}{\min} F(y)
\end{align}
where $F(y)$ corresponds to the linear combination of $\loss(y,y_i)$ weighted by the scores $\alpha_i(x)$ computed according to \Cref{eq:alpha-scores}. If $\Y$ is a linear manifold or a subset of $\M = \R^{d}$, this problem can be solved by means of gradient-based minimization algorithms, such as Gradient Descent (GD):
\begin{align}
y_{t+1} = y_t - \eta_t \nabla F(y_t)
\end{align}
for a step size $\eta_t \in \R$. This algorithm can be extended to \textit{Riemannian gradient descent} (RGD) \cite{zhang2016first} on manifolds, as 
\begin{align}\label{eq:RGD}
y_{t+1} = Exp_{y_t}( \eta_t \nabla_{\M} F(y_t))
\end{align}

\begin{table}[t]
	\begin{tabular}{ccc}
		\toprule
		  {} &   {Positive definite matrix manifold ($\ppp m$)} & Sphere ($\S_{d-1}$)  \\
		\midrule
		  {$F(y)$}
		&  {$\sum\limits_{i=1}^{n} \alpha_i \| \log(Y^{- \frac{1}{2}} Z_i Y^{- \frac{1}{2}}) \|_{F}^{2}$} 
		&$\sum\limits_{i=1}^{n} \alpha_i \arccos\left(\scal{z_i}{y}\right)^2 $\\
		  {$\nabla_{\M} F(y) $}
		&  {$2 \sum\limits_{i=1}^{n} \alpha_i Y^{ \frac{1}{2} } \log(Y^{ \frac{1}{2} } Z_i^{-1} Y^{ \frac{1}{2} }) Y^{\frac{1}{2}}$}
		&$4\sum_{i=1}^n \alpha_i (yy^{T} - I) \frac{\arccos(\langle z_i, y\rangle)}{\sqrt{1 - \langle z_i, y \rangle}} z_{i}$   \\
		  {$R_{y}(v)$}
		&  {$ Y^{\frac{1}{2}} \exp(Y^{-\frac{1}{2}} v Y^{-\frac{1}{2}} ) Y^{\frac{1}{2}}$}
		&$\frac{v}{\|v\|}$  \\
		\bottomrule
	\end{tabular}
	\caption{Structured loss, gradient of the structured loss and retraction for $\ppp m$ and $S_{d-1}$. $Z_i \in \ppp m$ and $z_i \in \S_{d-1}$ are the training set points. $I \in \R^{d \times d}$ is the identity matrix.\label{tab:manifoldinfo}}
\end{table}

Where $\nabla_{\M} F$ is the gradient defined with respect to the Riemannian metric (see \cite{absil2009optimization}) and $Exp_{y}:T_y \M \to\M$ denotes the exponential map on $y\in\Y$, mapping a vector from the tangent space $T_y \M$ to the associated point on the manifold according to the Riemannian metric \cite{lee2003smooth}. For completeness, the algorithm is recalled in \Cref{sec:algs-app}.
For this family of gradient-based algorithms it is possible to substitute the exponential map with a retraction $R_{y} \colon T_y \M \to \M$, which is a first order approximation to the exponential map. Retractions are often faster to compute and still offer convergence guarantees \cite{absil2009optimization}. In the following experiments we will use both retractions and exponential maps. We mention that the step size $\eta_t$ can be found with a line search over the validation set, for more see \cite{absil2009optimization}.

\Cref{tab:manifoldinfo} reports gradients and retraction maps for the geodesic distance of two problems of interest considered in this work: positive definite manifold and the sphere. See \Cref{sec:exp-positive-definite,sec:exp-fingerprints} for more details on the related manifolds. 

We point out that using optimization algorithms that comply with the geometry of the manifold, such as RGD, guarantees that the computed value is an element of the manifold. This is in contrast with algorithms that compute a solution in a linear space that contains $\M$ and then need to project the computed solution onto $\M$. We  next discuss  empirical evaluations of the proposed manifold structured estimator on both synthetic and real datasets.
%

% In this section we show results of our method applied to three experiments. Two are based on synthetic datasets to better show the properties of the proposed method. The third experiment is performed on a real dataset and compares the performance of our method with the one proposed in \cite{steinke2010nonparametric}.
\subsection{Synthetic Experiments: Learning Positive Definite Matrices}\label{sec:exp-positive-definite}
We consider the problem of learning a function $f:\R^d\to\Y=\ppp m$, where $\ppp m$ denotes the {\em cone of positive definite (PD)} $m \times m$ matrices. Note that $\ppp m$ is a manifold with squared geodesic distance $\loss_{\textrm{PD}}$ between any two PD matrices $Z,Y\in \ppp m$ defined as
\eqals{
	\loss_{\textrm{PD}}(Z,Y) = \|\log(Y^{-\frac{1}{2}} Z ~Y^{-\frac{1}{2}})\|_F^2  
}
where, for any $M\in\ppp m$, we have that $M^{\frac{1}{2}}$ and $\log(M)$ correspond to the matrices with same eigenvectors of $M$ but with respectively the square root and logarithm of the eigenvalues of $M$. In Table \ref{tab:manifoldinfo} we show the computation of the structured loss, the gradient of the structured loss and the exponential map of the PD matrix manifold. We refer the reader to \cite{moakher2006symmetric, bhatia2009positive} for a more detailed introduction on the manifold of positive definite matrices.

\begin{table}[t]
\footnotesize
	\newcommand{\sss}[2]{$#1$\scriptsize{$\pm#2$}}
	\centering
	\begin{tabular}{lcccc}
		\toprule
		& \multicolumn{2}{c}{\bf Squared loss} & \multicolumn{2}{c}{\bf $\loss_{\rm PD}$ loss}\\
		Dim & \small{\bf KRLS}   &\small{\bf SP}  &\small{\bf KRLS} &\small{\bf SP} \\
		\midrule
		5   &\sss{0.72}{0.08}   &\sss{0.89}{0.08}  &\sss{111}{64}  &\sss{0.94}{0.06} \\
		10  &\sss{0.81}{0.03} &\sss{0.92}{0.05} &\sss{44}{8.3}  &\sss{1.24}{0.06}\\
		15  &\sss{0.83}{0.03}   &\sss{0.91}{0.06}  &\sss{56}{10} &\sss{1.25}{0.05}\\
		20  &\sss{0.85}{0.02} &\sss{0.91}{0.03} &\sss{59}{12} &\sss{1.33}{0.03}\\
		25  &\sss{0.87}{0.01}   &\sss{0.91}{0.02}  &\sss{72}{9}  &\sss{1.44}{0.03}\\
		30  &\sss{0.88}{0.01}   &\sss{0.91}{0.02}  &\sss{67}{7.2}  &\sss{1.55}{0.03}\\
		\bottomrule
	\end{tabular}
	\caption{Simulation experiment: average squared loss (First two columns) and $\loss_{\rm PD}$ (Last two columns) error of the proposed structured prediction (SP) approach and the KRLS baseline on learning the inverse of a PD matrix for increasing matrix dimension.\label{tab:simulation}}
\end{table}
For the experiments reported in the following we compared the performance of the manifold structured estimator minimizing the loss $\loss_{\textrm{PD}}$ and a Kernel Regularized Least Squares classifier (KRLS) baseline (see Appendix~\ref{sec:KRLS-app}), both trained using the Gaussian kernel $k(x,x') = \exp(-\|x-x'\|^2 / 2\sigma^2)$. The matrices predicted by the KRLS estimator are projected on the PD manifold by setting to a small positive constant ($1e-12$) the negative eigenvalues. For the manifold structured estimator, the optimization problem at \eqref{eq:structestimator} was performed with the Riemannian Gradient Descent (RGD) algorithm \cite{absil2009optimization}. We refer to \cite{zhang2016first} regarding the implementation of the RGD in the case of the geodesic distance on the PD cone.\\

\noindent{\bf Learning the Inverse of a Positive Definite Matrix.} We consider the problem of learning the function $f:\ppp m\to\ppp m$ such that $f(X) = X^{-1}$ for any $X\in\ppp m$. Input matrices are generated as $X_i = U\Sigma U^\top\in\ppp m$ with $U$ a random orthonormal matrix sampled from the Haar distribution \cite{diestel2014joys} and $S\in\ppp m$ a diagonal matrix with entries randomly sampled from the uniform distribution on $[0,10]$. We generated datasets of increasing dimension $m$ from $5$ to $50$, each with $1000$ points for training, $100$ for validation and $100$ for testing. The kernel bandwidth $\sigma$ was chosen and the regularization parameter $\lambda$ were selected by cross-validation respectively in the ranges $0.1$ to $1000$
and $10^{-6}$ to $1$ (logarithmically spaced).

\begin{figure}[t]
	\CenterFloatBoxes
	\begin{floatrow}
		\begin{minipage}[t]{0.3\textwidth}
			\footnotesize
			\begin{tabular}{lc}
				\toprule
				& $\Delta$ Deg. \\
				\midrule
				KRLS      & $26.9\pm5.4$ \\
				MR\cite{steinke2010nonparametric} &$22\pm6$ \\
				SP (ours)    & $\bf 18.8\pm3.9$\\
				\bottomrule
			\end{tabular}
		\end{minipage}
		\begin{minipage}[t]{0.7\textwidth}
			\ffigbox{% 
				% \hspace*{-0.35\textwidth}
				\includegraphics[width=1\textwidth]{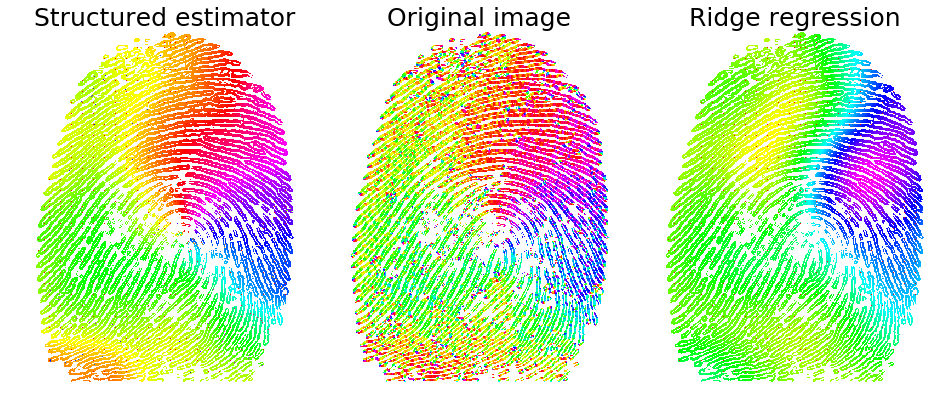}%
			}{}
		\end{minipage}
		\caption{(Left) Fingerprints reconstruction: Average absolute error (in degrees) for the manifold structured estimator (SP), the manifold regression (MR) approach in \cite{steinke2010nonparametric} and the KRLS baseline. (Right) Fingerprint reconstruction of a single image where the structured predictor achieves $15.7$ of average error while KRLS $25.3$. \label{tab:fingerprinterror}}
	\end{floatrow}
\end{figure}

\Cref{tab:simulation} reports the performance of the manifold structured estimator (SP) and the KRLS baseline with respect to both the $\loss_{\textrm{PD}}$ loss and the least squares loss (normalized with respect to the number of dimensions). Note that the KRLS estimator target is to minimize the least squares (Frobenius) loss and is not designed to capture the geometry of the PD cone. We notice that the proposed approach significantly outperforms the KRLS baseline with respect to the $\loss_{\rm PD}$ loss. This is expected: $\loss_{\rm PD}$ penalizes especially matrices with very different eigenvalues and our method cannot predict matrices with non-positive eigenvalues, as opposed to KRLS which computes a linear solution in $\R^{d^2}$ and then projects it onto the manifold. However the two methods perform comparably with respect to the squared loss. This is consistent with the fact that our estimator is aware of the natural structure of the output space and uses it profitably during learning. 
\subsection{Fingerprint Reconstruction}\label{sec:exp-fingerprints}

We consider the fingerprint reconstruction application in \cite{steinke2010nonparametric} in the context of manifold regression. Given a partial image of a fingerprint, the goal is to reconstruct the contour lines in output. Each fingerprint image is interpreted as a separate structured prediction problem where training input points correspond to the $2$D position $x\in\R^2$ of valid contour lines and the output is the local orientation of the contour line, interpreted as a point on the circumference $\S_1$. The space $\S_1$ is a manifold with squared geodesic distance $\loss_{\S_1}$ between two points $z,y\in \S_1$ corresponding to 
\eqals{\label{eq:spheregeodetic}
	\loss_{\S_1}(z,y) = \arccos\left(\scal{z}{y}\right)^2
}
where $\arccos$ is the inverse cosine function. In Table \ref{tab:manifoldinfo} we show the computation of the structured loss, the gradient of the structured loss and the chosen retraction for the sphere manifold. We compared the performance of the manifold structured estimator proposed in this paper with the manifold regression approach in \cite{steinke2010nonparametric} on the FVC fingerprint verification challenge dataset\footnote{\url{http://bias.csr.unibo.it/fvc2004}}. The dataset consists of $48$ fingerprint pictures, each with $\sim1400$ points for training, $\sim1000$ points for validation and the rest ($\sim25000$) for test. 

\Cref{tab:fingerprinterror} reports the average absolute error (in degrees) between the true contour orientation and the one estimated by our structured prediction approach (SP), the manifold regression (MR) in \cite{steinke2010nonparametric} and the KRLS baseline. Our method outperforms the MR competitor by a significant margin. As expected, the KRLS baseline is not able to capture the geometry of the output space and has a significantly larger error of the two other approaches. This is also observed on the qualitative plot in \Cref{tab:fingerprinterror} (Left) where the predictions of our SP approach and the KRLS baseline are compared with the ground truth on a single fingerprint. Output orientations are reported for each pixel with a color depending on their orientation (from $0$ to $\pi$). While the KRLS predictions are quite inconsistent, it can be noticed that our estimator is very accurate and even ``smoother'' than the ground truth. 
\begin{table}[t]
	\begin{tabular}{lcc}	
		\toprule
		& KRLS & SP (Ours)\\
		\midrule
		Emotions    &$0.63$ & $\bf 0.73$\\
		CAL500 		&$\bf 0.92$ & $\bf 0.92$ \\
		Scene       &$0.62$ & $\bf 0.73$\\
		\bottomrule
	\end{tabular}
	\caption{Area under the curve (AUC) on multilabel benchmark datasets \cite{tsoumakas2009mining} for KRLS and SP.\label{tab:multilabel}}
\end{table}

\subsection{Multilabel Classification on the Statistical Manifold}

We evaluated our algorithm on multilabel prediction problems. In this context the output is an $m$-dimensional histogram, i.e. a discrete probability distribution over $m$ points. We consider as manifold the space of probability distributions over $m$ points, that is the $m$-dimensional simplex $\Delta^m$ endowed with the {\em Fisher information metric} \cite{amari2007methods}. We will consider $\Y = \Delta^m_{\epsilon}$ where we require $y_1,\dots, y_m \geq \epsilon$, for $\epsilon > 0$. In the experiment we considered $\epsilon = 1e-5$. The geodesic induced by the Fisher metric is,
$d(y,y') =  \arccos \left(\sum_{i=1}^m \sqrt{y_i y_i'}\right)$\cite{nielsen2017clustering}.
This geodesic comes from applying the map $\pi \colon \Delta^m \to \S_{m-1}, \; \pi(y) = (\sqrt{y_1}, \ldots, \sqrt{y_{m+1}})$ to the points $\{y_i\}_{i=1}^{n} \in \Delta^m$. This results in points that belong to the intersection of the positive quadrant $\R^m_{++}$ and the sphere $\S_{m-1}$. We can therefore use the geodetic distance on the Sphere and gradient and retraction map described in \Cref{tab:manifoldinfo}.
We test our approach on some of the benchmark multilabel datasets described in \cite{tsoumakas2009mining} and we compare the results with the KRLS baseline. We cross-validate $\lambda$ and $\sigma$ taking values, respectively, from the intervals $[1e-6, 1e-1]$ and $[0.1, 10]$.  We compute the area under curve (AUC)~\cite{srinivasan1999note} metric to evaluate the quality of the predictions, results are shown in Table \ref{tab:multilabel}.
\section{Conclusions}
In this paper we studied a structured prediction approach for manifold valued learning problems. 
In particular we characterized a wide class of loss functions (including the geodesic distance) for which we proved the considered algorithm to be statistically consistent, additionally providing finite sample bounds under standard regularity assumptions. 
Our experiments show promising results on synthetic and real data using two common manifolds: the positive definite matrices cone and the sphere. With the latter we considered applications on fingerprint reconstruction and multi-labeling. The proposed method leads to some open questions. From a statistical point of view it is of interest how invariants of the manifold explicitly affect the learning rates, see \Cref{rem:cdelta}. From a more computational perspective, even if experimentally our algorithm achieves good results we did not investigate convergence guarantees in terms of optimization.
{
	\bibliographystyle{plain}
	\bibliography{../biblio/biblio}
}

\newpage

\appendix
\section*{Appendix}

The appendix of this work is organized in the following sections:
\begin{itemize}
\item A SELF property for smooth functions defined on manifolds (\Cref{thm:smooth-self}).
\item B Proof of SELF property for squared geodesic distances (\Cref{thm:geodesic-self}).
\item C Generalization bounds for the structured estimator with squared geodesic loss (\Cref{thm:generalization-bound}).
\item D Basic definitions and concepts for Riemannian manifolds.
\item E Riemannian gradient descent algorithm.
\item F A note on KRLS for the experiments in \Cref{sec:exp-positive-definite} on PD matrices.
\end{itemize}

\section{Proof of \Cref{thm:smooth-self}}\label{sect:proof}

We prove here intermediate results that will be key to prove \Cref{thm:smooth-self}. We refer to \cite{lee2003smooth} for basic definitions on manifolds and to \cite{aronszajn1950theory} for an introduction on reproducing kernel Hilbert spaces (RKHS).\\

\noindent{\bf Notation and Definitions.} We recall here basic notations and definition that will be used in the following. Given a smooth manifold $\M$, for any open subset $U\subseteq\M$ we denote by $\cinf(U)$ the set of smooth functions on $U$ and with $\ccinf(U)$ the set of {\em compactly supported} smooth functions on $U$, namely functions such that the closure of their support is a compact set. For a compact subset $N\subset\M$ we denote by $\ccinf(N)$ the set of all functions $h:N\to\R$ that admit an extension $\bar h\in\ccinf(\M)$ such that $\bar h|_N = h$ and its support is contained in $N$, namely it vanishes on the border of $N$. Finally, for any subset $N$ of $\M$ we denote $\cinf(N)$ the set of all functions that admit a smooth extension in $\cinf(\M)$. 

In the following, a central role will be played by tensor product of topological vector spaces \cite{treves2016topological}. In particular, for a Hilbert space $\hh$, we will denote $\hh\otimes\hh$ the closure of the tensor product between $\hh$ and itself with respect to the canonical norm such that $\|h\otimes h'\|_{\hh\otimes\hh} = \|h\|_\hh\|h'\|_\hh$ for any $h,h'\in\hh$. Moreover, to  given a compact set $N\subset\R^d$, we recall that $\ccinf(N)\hat\otimes_\pi\ccinf(N)$ denotes the completion of the topological tensor product between $\ccinf(N)$ and itself with respect to the projective topology (see \cite{treves2016topological} Def. $43.2$ and $43.5$). In the following, for simplicity, we will denote this space with $\ccinf(N)\otimes\ccinf(N)$ with some abuse of notation. Finally, for any subset $\Y\subseteq\M$ and space $\cal F$ of functions from $\M$ to $\R$ we denote by $\cal F|_\Y$ the space of functions from $\Y$ to $\R$ that admit an extension in $\cal F$. In particular not that $\cinf(\Y) = \cinf(\M)|_\Y$.

\subsection{Auxiliary Results} 
We are ready to prove the auxiliary results. 

\begin{lemma}\label{lem:loss-in-hh-is-self}
	Let $\M$ be a topological space, $Y\subseteq\M$ be a compact subset and $\hh$ a reproducing kernel Hilbert space of functions on $\M$ with kernel $K:\M\times\M\to\R$ such that there exists $\kappa>0$ for which $k(y,y)\leq \kappa^2$ for any $y\in\Y$. Then, for any $\bar h\in\hh\otimes\hh$, its restriction  to $\Y\times\Y$, $h = \bar h|_{\Y\times\Y}$ is SELF. 
\end{lemma}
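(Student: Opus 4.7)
The plan is to exploit the canonical isometric isomorphism between $\hh \otimes \hh$ and the space $\mathrm{HS}(\hh)$ of Hilbert--Schmidt operators on $\hh$. Given $\bar h \in \hh \otimes \hh$, let $V \in \mathrm{HS}(\hh)$ denote the operator associated with $\bar h$ under this identification. Since Hilbert--Schmidt operators are bounded, $V$ is a continuous linear operator, with $\|V\| \leq \|V\|_{\mathrm{HS}} = \|\bar h\|_{\hh \otimes \hh}$, which handles the ``continuous linear operator'' requirement of Definition~\ref{def:SELF} for free.

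Next I would define the feature map $\psi\colon \Y \to \hh$ by $\psi(y) = k(y,\cdot)$. By the reproducing property, $\ip{f}{\psi(y)}_\hh = f(y)$ for every $f \in \hh$ and every $y \in \M$, and in particular $\|\psi(y)\|_\hh^2 = k(y,y) \leq \kappa^2$ on $\Y$, so $\psi$ lands in a bounded subset of $\hh$ (continuity of $\psi$ follows from continuity of $k$ on $\Y\times\Y$, which is the standard RKHS setup tacitly in force here). The core identity to establish is then
\[
\bar h(y,y') \;=\; \ip{\psi(y)}{V \psi(y')}_\hh \qquad \forall\, y,y' \in \Y,
\]
after which restriction to $\Y\times\Y$ gives exactly the SELF factorization for $h = \bar h|_{\Y\times\Y}$.

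To prove this identity I would first verify it for simple tensors: if $\bar h = f \otimes g$ then $V$ is the rank-one operator $u \mapsto \ip{g}{u}_\hh f$, and a direct computation using the reproducing property yields
\[
\ip{\psi(y)}{V \psi(y')}_\hh \;=\; \ip{g}{\psi(y')}_\hh\, \ip{\psi(y)}{f}_\hh \;=\; g(y')\,f(y) \;=\; (f\otimes g)(y,y').
\]
By linearity, the identity extends to any finite linear combination of simple tensors, i.e.\ to all finite-rank operators.

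The one nontrivial step, and the main (mild) obstacle, is the extension from finite-rank operators to the whole of $\hh\otimes\hh$ by density. For this I would argue that both sides of the identity are continuous functionals of $\bar h$ with respect to the $\hh\otimes\hh$ norm: the right-hand side because $|\ip{\psi(y)}{V\psi(y')}_\hh| \leq \|\psi(y)\|\,\|\psi(y')\|\,\|V\| \leq \kappa^2 \|\bar h\|_{\hh\otimes\hh}$, and the left-hand side because pointwise evaluation at $(y,y')$ in the product RKHS $\hh\otimes\hh$ (with reproducing kernel $\bar k((y,z),(y',z'))=k(y,y')k(z,z')$) is a bounded linear functional of norm at most $\bar k((y,y'),(y,y'))^{1/2} = k(y,y)^{1/2}k(y',y')^{1/2} \leq \kappa^2$. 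Since finite-rank operators are dense in $\mathrm{HS}(\hh) \cong \hh\otimes\hh$, the identity passes to all of $\hh\otimes\hh$, completing the verification that $h$ satisfies the SELF definition with feature map $\psi$ and operator $V$. This is precisely the place where the bounded-kernel hypothesis $k(y,y) \leq \kappa^2$ on $\Y$ is indispensable.
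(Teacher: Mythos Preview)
Your proposal is correct and follows essentially the same route as the paper: identify $\hh\otimes\hh$ with the Hilbert--Schmidt operators on $\hh$, take $\psi(y)=k(y,\cdot)$, and bound $\|V\|\leq\|V\|_{\mathrm{HS}}$. The only cosmetic difference is that the paper obtains the key identity in one stroke by invoking the reproducing property of the product RKHS $\hh\otimes\hh$ (so $\bar h(y,z)=\langle \bar h, K_y\otimes K_z\rangle_{\hh\otimes\hh}$ directly, then unpacks this via the trace), whereas you first check simple tensors and pass to the limit by density; both arguments are equivalent and rely on the same product-kernel structure you yourself cite for the continuity of point evaluation.
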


\begin{proof}
	Denote $K_y = k(y,\cdot)\in\hh$ for every $y\in\M$. Then the space $\hh\otimes\hh$ is an RKHS with reproducing kernel $\bar K:(\M\times\M)\times(\M\times\M)\to\R$ such that $\bar K((y,z),(y',z')) = K(y,y')K(z,z')$ for any $y,y',z,z'\in\M$ (see e.g. \cite{aronszajn1950theory}). In particular $\bar K_{(y,z)} = K_y \otimes K_z$. Let now $\bar h:\M\times\M\to\R$ be a function in $\hh\otimes\hh$. In particular, there exist a $V\in\hh\otimes\hh$ such that $\scal{V}{K_y\otimes K_z}_{\hh\otimes\hh} = \bar h(y,z)$ for any $y,z\in\Y$ (reproducing property). Note that $\hh\otimes\hh$ is isometric to the space of Hilbert-Schmidt operators from $\hh$ to itself, with inner product corresponding to $\scal{A}{B}_{\hh\otimes\hh} = \scal{A}{B}_\HS = \tr(A^*B)$ for any $A,B\in\hh\otimes\hh$, with $A^*$ denoting the conjugate of $A^*\in\hh\otimes\hh$.  
	Therefore, for any $y,z\in\Y$ we have
	\eqal{\label{eq:def-V}
		\bar h|_{\Y\times\Y}(y,z) =  \bar h(y,z) = \scal{V}{K_y\otimes K_z}_{\hh\otimes\hh} = \tr(V^*K_y\otimes K_z) = \scal{K_z}{V^* K_y}_\hh.
	}
	Since $K_y$ is bounded in $\hh$, for $y \in \Y$ and the operator norm of $V$ is bounded by its Hilbert-Schmidt norm, namely $\|V\|\leq\|V\|_{\HS}$, we can conclude that $h = \bar h|_{\Y\times\Y}$ is indeed SELF. 
\end{proof}

\begin{lemma}\label{lem:h-contains-cinfty0}
	Let $\M$ satisfy \Cref{asm:without-boundary}. Then there exists a reproducing kernel Hilbert space of functions $\hh$ on $\M$, with bounded kernel, such that $\ccinf(\M) \subseteq \hh$. 
\end{lemma}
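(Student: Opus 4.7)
The plan is to take $\hh = H_s^2(\M)$, the Sobolev space of square-integrable functions on $\M$ with smoothness index $s > d/2$, defined intrinsically via covariant derivatives and the Riemannian volume form (following \cite{hebey2000nonlinear}). Two properties need to be established: (i) $H_s^2(\M)$ is a Hilbert space that is an RKHS with a kernel bounded uniformly on $\M$, and (ii) $\ccinf(\M) \subseteq H_s^2(\M)$.

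For (i), I would invoke the Sobolev embedding theorem for Riemannian manifolds. This is precisely where \Cref{asm:without-boundary} enters: under the hypotheses that $\M$ is complete with Ricci curvature bounded below and positive injectivity radius, the monograph \cite{hebey2000nonlinear} establishes a continuous embedding $H_s^2(\M) \hookrightarrow C_b^0(\M)$ for every $s > d/2$, where $C_b^0(\M)$ denotes the bounded continuous functions with the supremum norm. Concretely, there exists a constant $C>0$ such that $|h(y)| \leq C \|h\|_{H_s^2(\M)}$ for every $h \in H_s^2(\M)$ and every $y \in \M$. This shows at once that each evaluation functional $\delta_y : h \mapsto h(y)$ is bounded on $H_s^2(\M)$ with operator norm at most $C$, uniformly in $y$. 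By the Riesz representation theorem, $H_s^2(\M)$ is then an RKHS with a reproducing kernel $k:\M\times\M\to\R$ satisfying $k(y,y) = \|\delta_y\|^2 \leq C^2$, hence globally bounded.

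For (ii), I would show by direct estimation that any $\phi \in \ccinf(\M)$ has finite Sobolev norm of order $s$: the covariant derivatives of $\phi$ up to order $s$ are smooth tensor fields with support contained in the same compact set as $\phi$, hence are uniformly bounded, and integrating the squared pointwise norms against the Riemannian volume form over that compact region yields a finite quantity. This inclusion is also recorded in \cite{hebey2000nonlinear} and completes the proof.

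The only genuinely delicate step is the Sobolev embedding. On a general complete Riemannian manifold classical Sobolev inequalities may fail because the geometry can degenerate at infinity (injectivity radius collapsing, or curvature blowing up), producing pathological local behavior. The twin conditions in \Cref{asm:without-boundary}, namely bounded-below Ricci curvature and positive injectivity radius, are exactly what is needed to secure harmonic (or normal) coordinate charts of uniformly controlled size, through which the Euclidean Sobolev embedding can be transported to $\M$ with a uniform constant independent of the base point. Once this geometric fact is imported as a black box from \cite{hebey2000nonlinear}, the remaining RKHS considerations and the verification that $\ccinf(\M)\subseteq H_s^2(\M)$ are routine.
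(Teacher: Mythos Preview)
Your proposal is correct and follows essentially the same route as the paper: take $\hh = H_s^2(\M)$ for $s>d/2$, invoke the Sobolev embedding from \cite{hebey2000nonlinear} (valid under \Cref{asm:without-boundary}) to obtain uniformly bounded evaluation functionals, apply Riesz to conclude $\hh$ is an RKHS with bounded kernel, and note $\ccinf(\M)\subseteq H_s^2(\M)$ directly from the definition. The only minor addition in the paper is an explicit appeal to Prop.~2.1 of \cite{hebey2000nonlinear} to equip $H_s^2(\M)$ with an inner product whose norm is equivalent to the Sobolev norm, but this is a technicality you implicitly assume.
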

\begin{proof}
	Let $H_s^2(\M)$ denote the Sobolev space on $\M$ of squared integrable functions with smoothness $s>0$ (see \cite{hebey2000nonlinear} for the definition of Sobolev spaces on Riemannian manifolds). By construction (see page 47 of \cite{hebey2000nonlinear}), $\ccinf(\M) \subset H_s^2(\M)$ for any $s>0$. To prove this Lemma, we will show that $H_s^2(\M)$ is an RKHS for any $s>d/2$. The proof is organized in two steps.\\
	
	\noindent{\bf Step 1: $H^2_s(\M)$ is continuously embedded in $C(\M)$.} By \Cref{asm:without-boundary}, we can apply Thm.~$3.4$ in \cite{hebey2000nonlinear} (see also Thm.~$2.7$~\cite{hebey2000nonlinear} for compact manifolds), which guarantees the existence of a constant $C > 0$ (see last lines of the proofs for its explicit definition) such that
	$$\sup_{y \in \M}|f(y)| \leq C \|f\|_{\H^2_s(\M)},$$
	for any $y \in \M$ and $f \in \H^2_s(\M)$.\\
	
	\noindent{\bf Step 2: Constructing $\hh$ from $H^2_s(\M)$.} Prop.~2.1 of \cite{hebey2000nonlinear} proves that there exists an inner product, that we denote by $\scal{\cdot}{\cdot}_\hh$, whose associated norm is equivalent to $\|\cdot\|_{H^2_s(\M)}$ and such that the space $\hh = (H^2_s(\M), \scal{\cdot}{\cdot}_\hh)$ is a Hilbert space. 
	
	Now, for any $y \in \M$ denote by $e_y : \hh \to \R$, the linear functional corresponding to the evaluation, that is $e_y(f) = f(y)$. Now by Step 1, we have that the linear functional $e_y$ is uniformly bounded and so continuous, indeed,
	$$
	|e_y(f)| = |f(y)| \leq C \|f\|_{\hh}, \quad \forall f \in \hh.
	$$
	So by the Riesz representation theorem $e_y \in \hh$ and so $\hh$ is a reproducing kernel Hilbert space, with kernel $k(y,y') = \scal{e_y}{e_{y'}}_\hh$, 
	(see \cite{aronszajn1950theory}, page 343, for more details).
	Note finally that the kernel is bounded since
	$$\|e_y\|_\hh = \sup_{\|f\|_\hh \leq 1} |\scal{e_y}{f}_\hh| = \sup_{\|f\|_\hh \leq 1}  |e_y(f)| \leq C,$$
	and therefore $k(y,y') \leq \|e_y\|_\hh\|e_{y'}\|_\hh \leq C^2$.
\end{proof}

In the following, let $A \subseteq \{f: U \to S\}$ and $B \subseteq \{g: V \to S\}$, with $U, V, S$ topological spaces. We denote $A \cong B$ if there exists an invertible map $q: U \to V$, such that $B = A \circ q^{-1}$ and $A = B \circ q$.

\begin{lemma}[see also \cite{mrvcun2005isomorphisms,nestruev2006smooth}]\label{lem:ccinftyN-isomorphic-Rd}
	Let $U$ be a geodesically convex open subset of a $d$-dimensional complete Riemannian manifold $\M$ without border, then there exists a smooth map $q: U \to \R^d$ with smooth inverse, such that
	$$\cinf(U) \cong \cinf(\R^d), \qquad \textrm{and} \qquad \ccinf(U) \cong \ccinf(\R^d)$$
	moreover for any compact set 
	$\Y \subset U$ there exists a compact set $R \subset \R^d$ such that $R = q(\Y)$ and the map $s$, that is the restriction of $q$ to $\Y \to R$, guarantees
	$$\cinf(\Y) \cong \cinf(R), \qquad \textrm{and} \qquad \ccinf(\Y) \cong \ccinf(R)$$
\end{lemma}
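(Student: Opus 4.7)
The strategy is to construct $q$ as a composition of two diffeomorphisms: first, the inverse of the Riemannian exponential map at a chosen base point, which identifies $U$ with a star-shaped open subset of $\R^d$; second, a radial rescaling that carries any such star-shaped set diffeomorphically onto all of $\R^d$. Once $q$ is in hand, both function-space isomorphisms reduce to pullback/pushforward along $q$ and $q^{-1}$, together with a cut-off bump function argument on the compact set $\Y$.

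\textbf{Construction of $q$.} Fix any $p \in U$ and identify $T_p\M$ with $\R^d$ through an orthonormal basis. Since $U$ is geodesically convex, for each $y \in U$ there is exactly one minimizing geodesic $t \mapsto \gamma_y(t)$ from $p$ to $y$, so $v_y = \dot\gamma_y(0) \in T_p\M$ is uniquely defined and yields an inverse $\exp_p^{-1}: U \to V \subseteq T_p\M$. The image $V$ is open and star-shaped with respect to $0$, since $tv_y \in V$ for all $t \in [0,1]$ because $\exp_p(tv_y) = \gamma_y(t) \in U$. Next, the classical fact that any open star-shaped subset of $\R^d$ is diffeomorphic to $\R^d$ — via a radial rescaling built from a smooth defining function that blows up at the boundary — furnishes a diffeomorphism $\phi: V \to \R^d$. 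Composition gives $q = \phi \circ \exp_p^{-1}: U \to \R^d$.

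\textbf{Function space isomorphisms.} In the open case, pullback by $q$ gives a bijection $\cinf(\R^d) \to \cinf(U)$ with inverse given by pullback by $q^{-1}$; because $q$ is a homeomorphism, it sends compact sets to compact sets, so the bijection restricts to $\ccinf(\R^d) \cong \ccinf(U)$. For the compact case, set $R = q(\Y)$ (compact in $\R^d$) and let $s = q|_\Y$. Given $f \in \cinf(\Y)$ with extension $\bar f \in \cinf(\M)$, the composition $\bar f|_U \circ q^{-1}$ is smooth on $\R^d$ and restricts to $f \circ s^{-1}$ on $R$, so $f \circ s^{-1} \in \cinf(R)$. Conversely, for $g \in \cinf(R)$ with extension $\bar g \in \cinf(\R^d)$, multiplying $\bar g \circ q \in \cinf(U)$ by a bump function $\chi \in \ccinf(U)$ with $\chi \equiv 1$ on $\Y$ and extending by zero to $\M$ produces a smooth extension of $g \circ s$. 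The $\ccinf$ case is analogous: a function whose extension is supported inside $\Y$ (resp.\ $R$) pulls back or pushes forward under the diffeomorphism to one supported inside $R$ (resp.\ $\Y$), and since such supports are compactly contained in the ambient open sets $U$ and $\R^d$, extension by zero remains smooth across the boundary.

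\textbf{Main obstacle.} The delicate point is verifying that $\exp_p^{-1}$ is a genuine diffeomorphism rather than a mere continuous bijection onto its image. Injectivity follows directly from the uniqueness clause of geodesic convexity, but nondegeneracy of $d\exp_p$ on $V$ requires ruling out conjugate points of $p$ inside $V$. The key observation is that the existence of a conjugate point along a minimizing geodesic segment forces, in every neighborhood, the appearance of nearby pairs of points joined by more than one minimizing geodesic, contradicting the uniqueness assumption in \Cref{asm:Y-geodesically-convex}. The other ingredient — that open star-shaped subsets of $\R^d$ are diffeomorphic to $\R^d$ — is standard and is the aspect also handled in the references \cite{mrvcun2005isomorphisms,nestruev2006smooth} cited in the statement.
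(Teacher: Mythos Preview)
Your construction is correct and takes a more explicit route than the paper. The paper does not build $q$ directly from $\exp_p^{-1}$ composed with a star-shaped rescaling; instead it first invokes \Cref{lem:d-smooth} to conclude that $d(p,\cdot)\in\cinf(U)$ for $p\in U$, and then cites Thm.~2 of \cite{wolter1979distance} as a black box producing the diffeomorphism $U\to\R^d$. The function-space isomorphisms are then obtained exactly as you do, by pre- and post-composition with $q$ and $q^{-1}$. Your version has the merit of being self-contained and of making the geometry of $q$ visible; the paper's version offloads the analytic work to the cited references, since the smoothness of $\geod^2$ away from $\cut(\M)$ already packages the exclusion of conjugate pairs that you isolate as your ``main obstacle.''

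On that obstacle: your sketch (a conjugate point inside an open geodesically convex set forces nearby pairs with non-unique minimizers) is essentially the same step the paper takes in the proof of \Cref{lem:d-smooth} when it asserts $N\times N\cap\cut(\M)=\emptyset$ from uniqueness of minimizing geodesics. In both places the inference uses that $\cut(p)$ is the closure of the multiple-minimizer set, so an interior cut point of an open $U$ is approximated by points of $U$ with several minimizers; the paper does not spell this out either but leans on \cite{villani2008optimal,wolter1979distance}. Your bump-function argument for the compact case is also a bit more detailed than the paper's, which treats the compact statement by the same composition argument after noting that $q$ carries $\Y$ to a compact $R\subset\R^d$.
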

\begin{proof}
	By \Cref{lem:d-smooth}, there exists a point $p\in U$ such that $d(p,\cdot)$ admits all directional derivatives in all points $q\in U$ (it is, in fact in $C^\infty(U)$). 
	We are therefore in the hypotheses of Thm. $2$ in \cite{wolter1979distance}, from which we conclude that there exists a smooth diffeomorphism between $U$ and $\R^d$ (with smooth inverse). Denoting by $q$ the diffeomorphism between $U$ and $\R^d$, for any function $f \in \cinf(U)$, we have $f \circ q^{-1} \in \cinf(\R^d)$, so $\cinf(U) \circ q^{-1} \subseteq \cinf(\R^d)$ and for any function $g \in \cinf(\R^d)$ we have $g \circ q \in \cinf(U)$, so $\cinf(\R^d) \circ q \subseteq \cinf(U)$. Finally we recall that if $A \subseteq B$, then $A \circ p \subseteq B \circ p$ for any set $A, B$ and any map $p$ applicable to $A, B$. Then
	$$\cinf(U) = \cinf(U) \circ q^{-1} \circ q \subseteq \cinf(\R^d) \circ q \subseteq \cinf(U)$$
	and so $\cinf(N) \cong \cinf(\R^d)$. The same reasoning holds $\ccinf(U) \cong \ccinf(\R^d)$. 
	
	Analogously, the smooth diffeomorphism $q$ maps compact subsets of $U$ to compact subsets of $\R^d$. Denote by $R \subset \R^d$ the compact subset that is $q(\Y)$, the image of $\Y \subseteq U$ a compact subset of $U$, then $s$ is the restriction of $q$ to $\Y \to R$. By the same reasoning as above, we have that $\cinf(\Y)\cong\cinf(R)$ via $s$.
\end{proof}

\begin{lemma}\label{lem:CYY-in-CNxCN}
	Let $U$ be a open geodesically convex subset of a complete Riemannian $d$-dimensional manifold $\M$ and $\Y$ a compact subset of $U$, then there exists a compact subset $N\subseteq U$ such that $\Y$ belongs to the interior of $N$ and 
	$$\cinf(\Y\times \Y) \subseteq (\ccinf(N) \otimes \ccinf(N))|_{Y \times Y}.$$
	Moreover, $\cinf(\Y)\subseteq\ccinf(N)|_\Y$.
\end{lemma}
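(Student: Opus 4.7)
The plan is to flatten $U$ to $\R^d$ via the diffeomorphism $q$ given by \Cref{lem:ccinftyN-isomorphic-Rd}, use the resulting Euclidean chart to construct the compact set $N$ and a smooth cutoff equal to $1$ on $\Y$ and supported inside $N$, and then identify $\ccinf(N)\otimes\ccinf(N)$ with $\ccinf(N\times N)$ via the Schwartz kernel theorem. Concretely, let $R=q(\Y)\subset\R^d$, which is compact, and set $\bar N=\{z\in\R^d:\mathrm{dist}(z,R)\le\epsilon\}$ for some small $\epsilon>0$; then $\bar N$ is compact with $R$ in its interior, and $N=q^{-1}(\bar N)\subset U$ is a compact set with $\Y$ in its interior, settling the first part of the claim. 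Pulling back a standard bump through $q$ produces a function $\chi\in\ccinf(\M)$ with $\chi\equiv 1$ on a neighborhood of $\Y$ and $\supp\chi$ contained in the interior of $N$; in particular $\chi\in\ccinf(N)$.

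\medskip

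Next I would use this cutoff to build the required extensions. Given $f\in\cinf(\Y\times\Y)$, the convention in the paper provides a smooth extension $\tilde f\in\cinf(\M\times\M)$, and then $F(y,y')=\tilde f(y,y')\chi(y)\chi(y')$ is smooth on $\M\times\M$, has compact support contained in $N\times N$, and restricts to $f$ on $\Y\times\Y$. The same construction with a single cutoff shows that any $g\in\cinf(\Y)$ extends to $\chi\cdot\tilde g\in\ccinf(\M)$ with support inside $N$, i.e.\ $g\in\ccinf(N)|_\Y$, which already yields the second inclusion of the lemma.

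\medskip

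For the first inclusion it remains to recognize that $F$, now viewed as a compactly supported smooth function on $N\times N$, belongs to $\ccinf(N)\otimes\ccinf(N)$. Transporting through $q\times q$ reduces this to the identification $\ccinf(\bar N)\hat\otimes_\pi\ccinf(\bar N)=\ccinf(\bar N\times\bar N)$ in $\R^d\times\R^d$. Since $\ccinf(\bar N)$, endowed with its natural Fr\'echet topology as the closed subspace of $\cinf(\R^d)$ of functions supported in $\bar N$, is a nuclear space, the Schwartz kernel theorem (Treves, Thm.~51.6) delivers exactly this isomorphism, so that $F$ indeed lies in $\ccinf(N)\otimes\ccinf(N)$ and its restriction to $\Y\times\Y$ equals $f$. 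The main delicate point I anticipate is making this last step airtight, since the required variant of the kernel theorem (smooth functions with prescribed compact support inside $\bar N$) is a corollary rather than a direct statement of the standard nuclear-space machinery; working in the flat chart provided by $q$ keeps the argument within the classical Euclidean setting, where the nuclearity of $\ccinf(\bar N)$ and the kernel theorem are textbook facts.
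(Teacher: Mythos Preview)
Your proposal is correct and follows essentially the same route as the paper: both arguments use \Cref{lem:ccinftyN-isomorphic-Rd} to reduce to the Euclidean setting, produce a compact neighborhood $N$ of $\Y$ together with a compactly supported extension of smooth functions (the paper cites Lee's extension lemma while you build the cutoff $\chi$ explicitly), and then invoke Treves' Thm.~51.6 to identify $\ccinf(N)\otimes\ccinf(N)$ with $\ccinf(N\times N)$. The only cosmetic difference is the order of presentation---the paper treats the flat case first and then transports via $q$, whereas you pass through the chart from the outset---but the content and the key external inputs are identical.
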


\begin{proof}
We first consider the real case $U = \M = \R^d$ with Euclidean metric. By Cor. 2.19 in \cite{lee2003smooth}, for any open subset $V\subset\R^d$ we have that any $f\in\cinf(\Y)$ admits an extension $\tilde f\in\cinf(\R^d)$ such that $\tilde f|_\Y = f$ and $\supp \tilde f \subset \ccinf(V)$. Then, since $\Y$ is bounded (compact in a complete space), there exists a bounded open set $V$ containing $\Y$. Let $N = \overline V$ the closure of $V$. $N$ is a compact set as well and contains $\Y$ in its interior. In particular, since for any $f\in\cinf(\Y)$ the extension $\tilde f$ has support contained in $V\subset N$, this shows that $\cinf(\Y)\subseteq \ccinf(N)$. Analogously we have $\cinf(\Y\times\Y) \subseteq \ccinf(N \times N)$. 

Now, by Thm. $51.6~(a)$ in \cite{treves2016topological}, we have that 
\eqals{
	\ccinf(N)\otimes\ccinf(N) \cong \ccinf(N \times N).
}
which concludes the proof in the real setting. The proof generalizes trivially to the case where $U$ is an open geodesically convex subset of a complete Riemannian manifold thanks to the isomorphisms between spaces of smooth functions provided by \Cref{lem:ccinftyN-isomorphic-Rd}.
\end{proof}

\subsection{Proof of \Cref{thm:smooth-self}}

\noindent For the following results we need to introduce the concept of {\em cut locus}. For any $y\in\M$, denote by $\textrm{Cut}(y) \subseteq \M$ the {\em cut locus} of $y$ the closure of the set of points $z\in\M$ that are connected to $y$ by more than one minimal geodesic (see \cite{gallot1990riemannian,sakai1996riemannian}). For any $y\in\Y$ we have $y\in\M\setminus\cutlocus(y)$, see e.g. Lemma $4.4$ in \cite{sakai1996riemannian}.

Finally we refine \Cref{asm:Y-geodesically-convex} to avoid pathological cases. Indeed a geodesically convex set can still have conjugate points on the boundary. To avoid this situation we restate \Cref{asm:Y-geodesically-convex} as follows \\

{\noindent\bf Assumption~2'}~~{\em
	$\widetilde{M}$ is an open geodesically convex subset of the manifold $\M$ and  $\Y$ is a compact subset of $\widetilde{M}$.
}\\

\begin{proof}[Proof of \Cref{thm:smooth-self}]

	%For any $y\in\M$, denote $\textrm{Cut}(y) \subseteq \M$ the {\em cut locus} of $y$, that is the set of points in $z\in\M$ that are connected by more than one minimal geodesic curve with $y$ (see \cite{gallot1990riemannian,sakai1996riemannian}). Note that for any $y\in\Y$ we have $\M\setminus\cutlocus(y)\supset\Y$ since  $\Y$ is geodesically convex. Moreover $\M\setminus\cutlocus(y)$ is open since $\cutlocus(y)$ is closed. Let $\wm = \bigcup_{y\in\Y}\M\setminus\cutlocus(y) = \M\setminus$
	
	By Asm.~2', let $\wm$ be an open geodesically convex subset of $\M$ such that $\Y \subset \wm \subseteq \M$. Apply \Cref{lem:CYY-in-CNxCN} and let $N\subseteq\wm$ be a compact set such that $\Y$ is contained in the interior of $N$, namely 
	\eqals{
		\cinf(\Y)\subseteq\ccinf(N)|_\Y \subseteq \ccinf(\M)|_\Y \subseteq\hh|_\Y.
	}
	Then, by applying again \Cref{lem:CYY-in-CNxCN} we have 
	\eqal{
		\cinf(\Y\times\Y) \subseteq(\ccinf(N)\otimes\ccinf(N))|_{\Y\times\Y} \subseteq (\hh\otimes\hh)|_{\Y\times\Y}.
	}

	% Let $h \in C^\infty(\Y\times\Y)$. Let $\wm$ be an open geodesically convex subset of $\M$ such that $\Y \subset \wm \subseteq \M$. Since $\wm \subseteq \M$ and $\wm$ is open, we have that $\ccinf(\wm) \subseteq \ccinf(\M)$. Moreover by applying \autoref{lem:CYY-in-CNxCN} to $\Y$ and $\wm$ and by \autoref{lem:h-contains-cinfty0}, we have
	% \eqals{
	% 	h\in C^\infty(\Y\times\Y) & \subseteq \left( \ccinf(\wm)\otimes\ccinf(\wm)\right)|_{\Y\times \Y} \\
	% 	& \subseteq \left( \ccinf(\M)\otimes\ccinf(\M)\right)|_{\Y\times \Y} \\
	% 	& = \left(\ccinf(\M)|_Y \right)\otimes\left(\ccinf(\M)|_Y\right)\\
	% 	& = \left(\cinf_c(\M)|_Y \right)\otimes\left(\cinf_c(\M)|_Y\right)\\
	% 	& \subseteq \left(\hh |_Y\right) \otimes \left(\hh|_Y\right)\\
	% 	& = \left(\hh \otimes \hh\right) |_{\Y\times\Y}.
	% }
	% The last equation follows from the fact that we are restricting the two spaces of functions on the compact set $\Y\times\Y$. 	
	%
	Therefore we conclude that for any $h\in\cinf(\Y)$, there exists $\bar h:\M\times\M\to\R$ with $\bar h\in\hh\otimes\hh$ and $h = \bar h |_{\Y \times \Y}$. Finally we apply \Cref{lem:loss-in-hh-is-self} to $\bar h$, which guarantees $h$ to be SELF.
\end{proof}

\section{Proof of \Cref{thm:geodesic-self}}
We prove a preliminary result. 

\begin{lemma}\label{lem:d-smooth}
	Let $\M$ be a Riemannian manifold and $N$ be a geodesically convex subset of $\M$. Then, 
	$$\geod^2|_{N \times N} \in C^\infty(N \times N).$$
\end{lemma}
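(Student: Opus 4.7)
The plan is to reduce the claim to the classical fact that on a Riemannian manifold the squared distance function is smooth away from the cut locus, and then to show that geodesic convexity of $N$ puts all pairs $(p,q) \in N \times N$ in this smooth region.

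More precisely, I would proceed as follows. First, recall that for any $p \in \M$, the exponential map $\exp_p : T_p\M \to \M$ is smooth; moreover there is a maximal open star-shaped domain $D_p \subseteq T_p \M$ (the interior of the "segment domain") such that $\exp_p$ restricts to a diffeomorphism from $D_p$ onto $\M \setminus \cutlocus(p)$, and inside $D_p$ the differential $d\exp_p$ is non-singular (no conjugate points are encountered along the way). Consequently the function $q \mapsto d(p,q)^2 = \|\exp_p^{-1}(q)\|_p^2$ is smooth on $\M \setminus \cutlocus(p)$. A standard joint-smoothness argument, based on applying the inverse function theorem to the smooth map $(p,v) \mapsto (p, \exp_p(v))$ on the open set $\{(p,v) : v \in D_p\} \subseteq T\M$, upgrades this to joint smoothness of $(p,q) \mapsto d^2(p,q)$ on the open subset $\Omega = \{(p,q) \in \M \times \M : q \notin \cutlocus(p)\}$.

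Second, I would show the inclusion $N \times N \subseteq \Omega$. Fix $p,q \in N$. By geodesic convexity of $N$ there is a \emph{unique} minimizing geodesic $\gamma : [0,1] \to N$ from $p$ to $q$; set $v := \dot\gamma(0) \in T_p\M$, so that $q = \exp_p(v)$ and $\|v\|_p = d(p,q)$. The point $q$ fails to lie in $\cutlocus(p)$ provided (i) there is only one minimizing geodesic from $p$ to $q$, and (ii) $v$ is not a (first) conjugate point of $\exp_p$. Condition (i) is immediate from geodesic convexity. For (ii), geodesic convexity of $N$ means in particular that the unique minimizing geodesic from $p$ to any $q' \in N$ depends continuously on $q'$; combining this with the openness assumption on the geodesically convex ambient set $\widetilde M$ (Asm.~2') allows one to extend $\gamma$ slightly beyond $q$ while preserving the minimizing property, which rules out $q$ being a cut point. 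Hence $(p,q) \in \Omega$.

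Combining the two steps, $N \times N \subseteq \Omega$ and $d^2$ is smooth on $\Omega$, so $d^2|_{N \times N} \in C^\infty(N \times N)$, as claimed.

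The main technical obstacle is step (ii): carefully arguing that geodesic convexity excludes first conjugate points on the relevant geodesic segment. This is somewhat delicate because geodesic convexity per se only gives uniqueness of the minimizer, whereas absence from the cut locus is slightly stronger; the cleanest route is via the extension-of-minimizing-geodesic argument mentioned above, using the openness of the ambient convex set together with completeness of $\M$ (\Cref{asm:without-boundary}) to guarantee that $\gamma$ can be extended past $q$ and remains a unique minimizer on the extension.
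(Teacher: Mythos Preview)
Your approach is essentially the paper's: show that $\geod^2$ is smooth off the cut locus and then use geodesic convexity of $N$ to conclude $N\times N$ avoids the cut locus. The paper simply cites \cite{villani2008optimal} for the first step, whereas you sketch it via the inverse function theorem applied to $(p,v)\mapsto(p,\exp_p v)$; and the paper handles the second step in one line by identifying $\cutlocus(y)$ with the set of points having multiple minimizing geodesics to $y$, so that geodesic convexity immediately gives $N\times N\cap\cutlocus(\M)=\emptyset$. Your extra care about first conjugate points is well placed (the paper is a bit loose here, since strictly speaking $\cutlocus(y)$ is the \emph{closure} of the multi-minimizer set), but note that your fix for (ii) invokes the openness of the ambient convex set from Assumption~2' and completeness from \Cref{asm:without-boundary}, neither of which appears in the hypotheses of this particular lemma.
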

\begin{proof}
	For any $y\in\M$, denote $\textrm{Cut}(y) \subseteq \M$ the {\em cut locus} of $y$, that is the set of poin{}ts in $z\in\M$ that are connected by more than one minimal geodesic curve with $y$ (see \cite{gallot1990riemannian,sakai1996riemannian}).
	Let $\textrm{Cut}(\M) = \bigcup_{y \in \M} (\{y\} \times \textrm{Cut}(y)) ~ \subseteq \M \times \M$. Then, then the squared geodesic distance is such that (see e.g. \cite{villani2008optimal}, page 336)
	$$\geod^2 \in C^\infty(\M \times \M \setminus \textrm{Cut}(\M)).$$
	Now note that by definition of geodesically convex subset $N\subseteq\M$, for any two points in $N$ there exist one and only one minimizing geodesic curve connecting them. Therefore, $N\times N ~\cap~ \textrm{Cut}(\M) = \emptyset$ and consequently $N\times N \subseteq \M \times \M \setminus \textrm{Cut}(\M)$. We conclude that the restriction of $\geod^2$ on $N \times N$ is $C^\infty$ as required. 
\end{proof}

\begin{proof}[Proof of \Cref{thm:geodesic-self}]
	By \Cref{lem:d-smooth}, under \Cref{asm:without-boundary} and \Cref{asm:Y-geodesically-convex}, the squared geodesic distances is smooth. The desired result is then obtained by applying \Cref{thm:smooth-self}.
\end{proof}

\section{Proof of \Cref{thm:generalization-bound}}

% \TBounds*

\begin{proof}
	The theorem is proved by combining \Cref{thm:smooth-self} with Thm. $5$ in \cite{ciliberto2016}. To characterize the constant $\mathsf{c}_\loss$ we need an extra step.
	
	Under \Cref{asm:without-boundary} and \Cref{asm:Y-geodesically-convex} and the smoothness of $\loss$, we can apply \Cref{thm:smooth-self}, which characterizes $\loss$ as SELF. According to the proof of \Cref{thm:smooth-self} and in particular of \Cref{lem:loss-in-hh-is-self}, for any $y,z\in\Y$ we have 
	\eqals{
		\loss(y,z) = \scal{\psi(y)}{V\psi(z)}_\hh
	}
	where $\hh = H_{s}^2(\M)$ with $s > d/2$, $\psi(y) = K_y(\cdot)$ where $K:\M\times\M\to\R$ is the reproducing kernel associated to $\hh$ and $V:\hh\to\hh$ is the operator defined in \Cref{eq:def-V}. In particular, by the isometry between the tensor space $\hh\otimes\hh$ and the space of Hilbert-Schmidt operators from $\hh$ to $\hh$, we have
	\eqals{
		\|V\|_\HS = \|\loss\|_{\hh\otimes\hh}.
	}
	To conclude, since $\loss$ is SELF, the following generalization bound in Thm. $5$ from \cite{ciliberto2016} 
	\eqals{
		\E(\fhat) - \E(\fstar) \leq \|V\|~\mathsf{q}~ \tau^2~ n^{-\frac{1}{4}}
	}
	holds with probability at least $1-8e^{-\tau}$. Here, $\|V\|$ denotes the operator norm of $\|V\|$ and $\mathsf{q}$ is a constant depending only on $\Y$ and the distribution $\rho$ (see end of proof of Lemma $18$ for additional details). Finally, we recall, by the relation between the operator and Hilbert-Schmidt norm, that $\|V\|\leq\|V\|_\HS = \|\loss\|_{\hh\otimes\hh} = \mathsf{c}_\loss$.

\end{proof}

\section{Differential geometry definitions}\label{sec:diffgeo-app}

A Riemannian manifold $(\M, g)$ of dimension $n$ is a topological space $\M$ such that every point $y \in \M$ has a neighbourhood which is homeomorphic to an open set in Euclidean space $\R^n$ and $g$ is a collection of inner product defined in every tangent space $T_{y}\M$ of every point $y \in \M$. Intuitively, the tangent space $T_y \M$ is an approximation of a neighbourhood of $y \in M$ that has a vector space structure. We will denote the inner product of $u, v \in T_y\M$ as $ \langle u, v \rangle_y$. Thanks the inner product structure in every tangent space of the manifold we can compute gradients of functions $f \colon \M \to \R$ that we will denote with $\nabla_{\M} f \colon \mathcal{F}_y (\M) \to T_{y}\M$. Where $F_y(\M)$ is the set of smooth real-valued functions defined on a neighbourhood of $y$.

For any $y_0, y_1 \in \M$ and $v \in T_y\M$ there is a unique smooth \textit{geodesic} curve $\gamma \colon [0,1] \to \M$ such that $\gamma(0) = y_0, \; \gamma(1)=y_1$ and $\frac{d}{dt} \gamma(0) = v$, this curve locally minimizes the path between $y_0$ and $y_1$. Given the geodesic between $y_0$ and $y_1$ with derivative  $\frac{d}{dt} \gamma(0) = v$, the exponential map $Exp_{y_0} \colon T_{y_0}\M \to \M$ maps vector $v \in T_{y_0}\M$ to $y_1$.
A retraction $R_{y} \colon T_y \M \to \M$,  is a first order approximation of the exponential map. Exponential maps are retractions.
\begin{figure}
	\centering
	\includegraphics[width=0.6\columnwidth]{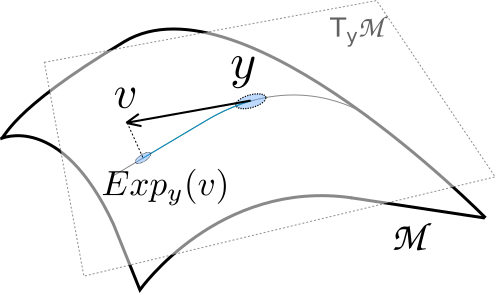}
	\caption{Pictorial representations of the exponential map.}
\end{figure}

\section{Riemannian Gradient Descent}\label{sec:algs-app}
In this section we report fully the algorithm Riemannian Gradient Descent.
\begin{algorithm}[H]
	\caption{Riemannian gradient descent}
	\begin{algorithmic}[1]
		\scriptsize
		\REQUIRE number of iterations $T$, step size $\eta$, initial point $y_0$
		\FOR{$t=0,\ldots,T-1$}
		\STATE $v_t = \nabla_{\mathcal{M}}\sum\limits_{i=1}^{n} \alpha_i(x) \L(y_t, y_i) $
		\STATE $y_{t+1} \leftarrow R_{y_t} (\eta_t \; v_t)$
		\ENDFOR
		\RETURN $y_T$
	\end{algorithmic}
\end{algorithm}

\section{Kernel Regularized Least Squares estimator for Positive definite matrices}\label{sec:KRLS-app}
We consider the case where we want to use KRLS estimators to predict a positive definite matrix given a data set $\{x_i, y_i\}_{i=1}^{n}$.
The KRLS estimator $f \colon \R^d \to \R$ is a function defined as $f(x) = \sum_{i=1}^{n} k(x, x_i) w_i$, where $k \colon \R^d \times \R^d \to \R$ is a reproducing kernel and $w = [w_1, \ldots, w_n] \in \R^n$ are constant weights  computed by solving the problem:

$$
\min_{f \in \mathcal{H}} \frac{1}{n} \sum\limits_{i=1}^{N} \| \hat{y}_i - \hat{K}w \|^2 +  \lambda \|w\|^{2}
$$
$\hat{K} \in \mathbb{R}^{n \times n}$ is the kernel matrix whose elements are defined as $ (K)_{ij} = k(x_i, x_j)$.

To predict a positive definite matrix $y \in \mathbb{P}_{d}^{++}$, a KRLS estimator is learned for every element of the flattened matrix $vec(y) \in \R^{d^2}$. Suppose $j \in \{1, \ldots, d^2\}$ is the index of the $j$-th component of $vec(y)$ that we want to predict, then we want to learn the estimator $f^{(j)}(x)= \sum_{i=1}^{n} k(x, x_i) w_i^{(j)} $. The corresponding problem has labels $\hat{y}^{(j)} = [ vec(y_1)_j, \ldots,  vec(y_{d^2})_j] $ and we solve for $w^{(j)} = [ w_1^{(j)}, \ldots, w_n^{(j)} ]$. 
Indeed we compute $d^2$ estimator to predict $vec(f) = [f^{(1)}(x), \ldots, f^{(d^2)}]$ and then recover $y$ from its vectorized form. Once the matrix is predicted we enforce it to be positive definite by performing a spectral decomposition and setting the negative eigenvalues to a small positive constant.\\
In general, when doing structured predictions with KRLS approach, it is necessary to project the outcome of the prediction on the desired manifold.
\end{document}